\def\eqref#1{equation~\ref{#1}}
\def\1{\bm{1}}
\DeclareMathAlphabet{\mathsfit}{\encodingdefault}{\sfdefault}{m}{sl}
\SetMathAlphabet{\mathsfit}{bold}{\encodingdefault}{\sfdefault}{bx}{n}
\newcommand{\reals}{\ensuremath{\mathbb{R}}}
\newcommand{\naturals}{\ensuremath{\mathbb{N}}}
\newcommand{\gph}{\mathsf{G}} % graph G
\newcommand{\gphh}{\mathsf{H}} % graph H
\newcommand{\V}{\mathsf{V}}
\newcommand{\E}{\mathsf{E}}
\newcommand{\get}{\leftarrow}
\newcommand{\bigO}{\mathcal{O}}
\newcommand{\resmap}[1]{\mathbf{O}_{#1}}
\newcommand{\pascal}{\texttt{PascalVOC-SP}\xspace}  % TODO: rename VOCSuperpixels? may be: PascalVOC-SP
\newcommand{\pepfunc}{\texttt{Peptides-func}\xspace}
\newcommand{\pepstruct}{\texttt{Peptides-struct}\xspace}
\newcommand{\cmark}{\ding{51}}%
\newcommand{\xmark}{\ding{55}}%
\newcommand{\GNN}{\operatorname{GNN}}
\newcommand{\xhdr}[1]{{\noindent\bfseries #1}.}
\newcommand{\specialcell}[2][c]{%
  \begin{tabular}[#1]{@{}c@{}}#2\end{tabular}}
\renewcommand{\ALG@name}{Algorithm}
\algnewcommand\algorithmicforeach{\textbf{for each}}
\definecolor{cgreen}{rgb}{0.2,0.6,0.2}
\definecolor{darkred}{rgb}{0.2,0.0,0.0}
\definecolor{darkgreen}{rgb}{0.0,0.4,0.0}
\definecolor{darkblue}{rgb}{0.0,0.0,0.4}
\theoremstyle{plain}
\newtheorem{theorem}{Theorem}[section]
\newtheorem{proposition}[theorem]{Proposition}
\newtheorem{lemma}[theorem]{Lemma}
\newtheorem{corollary}[theorem]{Corollary}
\theoremstyle{definition}
\newtheorem{definition}[theorem]{Definition}
\theoremstyle{remark}
\newtheorem{remark}[theorem]{Remark}
\theoremstyle{plain}
\newtheorem*{rep@theorem}{\rep@title}
\newcommand{\newreptheorem}[2]{%
\newenvironment{rep#1}[1]{
 \def\rep@title{#2 \ref{##1}}
 \begin{rep@theorem}}
 {\end{rep@theorem}}}
\title{Bundle Neural Networks \\ for message diffusion on graphs}
\author{%
  Jacob Bamberger$^1$\thanks{jacob.bamberger@cs.ox.ac.uk} \ \  Federico Barbero$^1$ \ \  Xiaowen Dong$^1$ \ \ Michael Bronstein$^{1, 2}$ \\
  \normalfont $^1$University of Oxford \ $^2$AITHYRA
}
\begin{document}

\maketitle

\begin{abstract}
The dominant paradigm for learning on graphs is message passing. Despite being a strong inductive bias, the local message passing mechanism faces challenges such as over-smoothing, over-squashing, and limited expressivity. To address these issues, we introduce Bundle Neural Networks (BuNNs), a novel graph neural network architecture that operates via \emph{message diffusion} on \emph{flat vector bundles} — geometrically inspired structures that assign to each node a vector space and an orthogonal map. A BuNN layer evolves node features through a diffusion-type partial differential equation, where its discrete form acts as a special case of the recently introduced Sheaf Neural Network (SNN), effectively alleviating over-smoothing. The continuous nature of message diffusion enables BuNNs to operate at larger scales, reducing over-squashing. We establish the universality of BuNNs in approximating feature transformations on infinite families of graphs with injective positional encodings, marking the first positive expressivity result of its kind. We support our claims with formal analysis and synthetic experiments. Empirically, BuNNs perform strongly on heterophilic and long-range tasks, which demonstrates their robustness on a diverse range of challenging real-world tasks.

\end{abstract}

\section{Introduction}
\looseness=-1
Graph Neural Networks (GNNs) \citep{sperduti_encoding_1993, scarselli_graph_2009, defferrard_convolutional_2016} are widely adopted machine learning models designed to operate over graph structures, with successes in diverse applications such as drug discovery \citep{stokes_deep_2020}, traffic forecasting \citep{derrow2021eta}, and recommender systems \citep{fan2019graph}. Most GNNs are \emph{Message Passing Neural Networks} (MPNNs) \citep{gilmer_neural_2017}, where nodes exchange messages with immediate neighbours. While effective, MPNNs face critical challenges such as over-smoothing \citep{li_deeper_2018, oono_graph_2020, cai_note_2020}, over-squashing \citep{alon_bottleneck_2020, topping_understanding_2022, di_giovanni_over-squashing_2023}, and limited expressivity \citep{xu_how_2019, morris_weisfeiler_2019}.
% Most GNN models function locally, respecting the topology by having nodes exchange messages directly with neighboring nodes. Such models are referred to as \emph{Message Passing Neural Networks (MPNNs) \citep{gilmer_neural_2017}}. While message passing has repeatedly proven to be a useful inductive bias, the local mechanism was shown to suffer from certain pathological issues such as over-smoothing \citep{li_deeper_2018, oono_graph_2020, cai_note_2020}, over-squashing \citep{alon_bottleneck_2020, topping_understanding_2022, di_giovanni_over-squashing_2023}, and limited expressivity \citep{xu_how_2019, morris_weisfeiler_2019}.

\begin{wrapfigure}{r}{6.20cm}
    \vspace{-1.1cm}
    \begin{center}
            \includegraphics[width=0.5\textwidth]{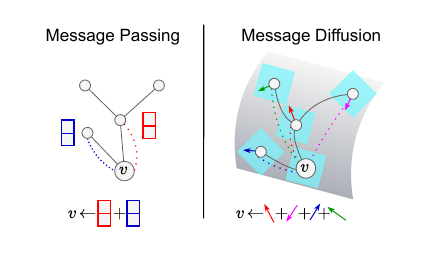}
    \end{center}
    \vspace{-0.98cm}
    \caption{Local message passing on graphs versus global message diffusion on bundles.}\label{fig:messdiff}
    \vspace{-0.3cm}
\end{wrapfigure}

Over-smoothing occurs when node features become indistinguishable as the depth of the MPNN increases, a problem linked to the stable states of the heat equation on graphs \citep{cai_note_2020}. While Sheaf Neural Networks \citep{bodnar_neural_2022} address this by enriching the graph with a \textit{sheaf} structure that assigns linear maps to edges and results in richer stable states of the corresponding heat equation, they remain MPNNs and inherit other limitations such as over-squashing, which restricts the amount of information that can be transmitted between distant nodes.

We propose Bundle Neural Networks (BuNNs), a new type of global GNN that operates over \emph{flat vector bundles} -- structures analogous to connections on flat Riemannian manifolds that augment the graph by assigning to each node a vector space and an orthogonal map. BuNNs \emph{do not perform `explicit' message passing} through multiple steps of information exchange between neighboring nodes, but instead operate via \textbf{message diffusion}. Each layer involves a node update step, and a diffusion step evolving the features according to a vector diffusion PDE as in \citet{singer_vector_2012}. The resulting architecture enjoys the desirable properties of Sheaf Neural Networks, in that they can avoid over-smoothing, but are global models that can operate at larger scales of the graph to tackle over-squashing, and it achieves better performance on a range of benchmark datasets. Additionally, we prove that equipped with injective positional encodings, BuNNs are {\bf compact uniform approximators}, a new type of universality result for feature transformation approximation.

In summary, our \textbf{contributions} are the following:
\vspace{-0.1cm}
\begin{itemize}
    \item We derive BuNNs from heat equations over flat vector bundles, and show that flat vector bundles are more amenable to computation than general vector bundles (Section \ref{sec:modeldef}).
    \item We prove that the diffusion process can mitigate over-smoothing and over-squashing (Section \ref{sec:modelproperties}), and support these claims with novel synthetic experiments (Section~\ref{sec:newsynth}).
    \item We prove that, with injective positional encodings, BuNNs are compact uniform universal approximators. To the best of our knowledge, this is the first of such results (Section \ref{sec:expressiveness}).
    \item We show that BuNNs perform well on heterophilic and long-range tasks, for instance, achieving a new state-of-the-art result on the \texttt{Peptides-func} dataset (Section \ref{sec:realexperiments}).
\end{itemize}
\vspace{-0.3cm}

\section{Background}\label{sec:background}
\vspace{-0.2cm}
\xhdr{Graphs} Let $\gph = (\V, \E)$ be an undirected graph on $n = |\V|$ nodes with edges $\E$. We  represent the edges via an adjacency matrix $\mathbf{A} \in \reals^{n\times n}$ where the entry $\mathbf{A}_{uv}$ for $u, v \in \V$ is $1$ if the edge $(u, v) \in \E$ and $0$ otherwise. Let $\mathbf{D}$ be the diagonal degree matrix with entry $\mathbf{D}_{vv} = d_v$ equal to the degree of $v$. The graph Laplacian is defined as $\mathbf{L}: = \mathbf{D} - \mathbf{A}$ and the random walk normalized graph Laplacian is defined as $\mathbf{\mathcal{L}} := \mathbf{I} - \mathbf{D^{-1}A}$. We assume that at each node $v \in \V$ we are given a $c$-dimensional signal (or node feature) $\mathbf{x}_v\in\reals^c$ and group such signals into a matrix $\mathbf{X}\in\reals^{n\times c}$.

\xhdr{GNNs and feature transformations} A \emph{feature transformation} on a graph $\gph$ is a permutation-equivariant map $f_\gph:\reals^{n\times c_1}\rightarrow \reals^{n \times c_2}$ that transforms the node signals. A $\GNN_{\boldsymbol{\Theta}}$ is a (continuous) map parameterized by $\boldsymbol{\Theta}$ that takes as input a graph alongside node signals $\gph = (\V, \E, \mathbf{X})$ and outputs a transformed signal $(\V, \E, \mathbf{X'})$. A GNN on a graph $\gph$ is therefore a feature transformation $\GNN_\Theta:\reals^{n \times c_1}\rightarrow\reals^{n \times c_2}$. Given a collection of graphs $\mathcal{G}$, a feature transformation $F$ on $\mathcal{G}$ is an assignment of every graph $\gph\in\mathcal{G}$ to a feature transformation $F_G:\reals^{n_G \times c_1}\rightarrow\reals^{n_G \times c_2}$. The set of continuous feature transformations over a collection of graphs in $\mathcal{G}$ is denoted $\mathcal{C}\left(\mathcal{G}, \reals^{c_1}, \reals^{c_2} \right)$. 
% We sometimes denote a vector bundle by a tuple $\left(\V,\ \E, \mathbf{O}\right)$.

\xhdr{Cellular sheaves} A \emph{cellular sheaf} \citep{curry_sheaves_2014} $\left(\mathcal{F}, \gph\right)$ over an undirected graph $\gph = \left(\V, \E\right)$ augments $\gph$ by attaching to each node $v$ and edge $e$ a vector space space called \textit{stalks} and denoted by $\mathcal{F}(v)$ and $\mathcal{F}(e)$, usually the stalks are copies of $\reals^d$ for some $d$. Additionally, every incident node-edge pair $v\trianglelefteq e$ gets assigned a linear map between stalks called \textit{restriction maps} and denoted $\mathcal{F}_{v\trianglelefteq e}: \mathcal{F}(v) \rightarrow \mathcal{F}(e)$. Given two nodes $v$ and $u$ connected by an edge $(v, \ u)$, we can \textit{transport} a vector $\mathbf{x}_v\in \mathcal{F}(v)$ from $v$ to $u$ by first mapping it to the stalk at $e=(v, u)$ using $\mathcal{F}_{v\trianglelefteq e}$, and mapping it to $\mathcal{F}(u)$ using the transpose $\mathcal{F}_{u\trianglelefteq e}^T$.
As a generalization of the graph adjacency matrix, the \textit{sheaf adjacency matrix} $\mathbf{A}_{\mathcal{F}}\in \reals^{nd\times nd}$ is defined as a block matrix in which each $d\times d$ block $(\mathbf{A}_{\mathcal{F}})_{uv}$ is $\mathcal{F}_{u\trianglelefteq e}^T \mathcal{F}_{v\trianglelefteq e}$ if there is an edge between $u$ and $v$ and $\mathbf{0}_{d\times d}$ otherwise. Similary, we define the block diagonal \emph{ degree matrix} $\mathbf{D}_\mathcal{F} \in \mathbb{R}^{nd \times nd}$ as $\left(\mathbf{D}_\mathcal{F}\right)_{vv}:=d_v\mathbf{I}_{d\times d}$, and the sheaf Laplacian is $\mathbf{L}_{\mathcal{F}} := \mathbf{D_{\mathcal{F}}} - \mathbf{A}_{\mathcal{F}}$. These matrices act as bundle generalizations of their well-known standard graph counterparts and we recover such matrices when $\mathcal{F}(v) \cong \mathbb{R}$ and $\mathcal{F}_{v\trianglelefteq e}{v} = 1$ for all $v \in \V$ and $e\in \E$.
% A vector $\mathbf{x}_v\in \mathcal{F}(v)$ can also be transported along a path $\gamma = v \xrightarrow{e_1} v_1 \xrightarrow{e_2} \dots \xrightarrow{e_{n}} v_n = u$ using the product $\mathcal{F}_{u\trianglelefteq e_n}^T \prod_{i=1}^{n-2} \left(\mathcal{F}_{v_{i}\trianglelefteq e_{i+1}} \mathcal{F}_{v_i\trianglelefteq e_i}^T\right) \mathcal{F}_{v\trianglelefteq e_1} $
% Cellular sheaves were used in \citet{bodnar_neural_2022} to define a graph neural network model capable of overcoming over-smoothing. To this end, they use the \textit{sheaf Laplacian} operator \citep{hansen_toward_2019} defined as a an operator $\mathcal{L}_\mathcal{F}: \reals^{n_\gph \times d}\rightarrow \reals^{n_\gph \times d}$ mapping a signal $\mathbf{X}$ to ...
% \begin{definition}
%     A cellular sheaf $(\gph, \ \mathcal{F})$ on an undirected graph $\gph=\left(\V,\ \E\right)$ consists of:\begin{itemize}
%         \item A vector space  $\mathcal{F}$ for each $v\in\V$.
%         \item A vector space $\mathcal{F}(e)$ for each $e\in \E$.
%         \item A linear map $\mathcal{F}_{v\trianglelefteq e}: \ \mathcal{F}(v)\rightarrow \mathcal{F}(e)$ for each incident $v\trianglelefteq e$ node-edge pair.
%     \end{itemize}
% \end{definition}

\xhdr{Vector bundles} When restriction maps are orthogonal, we call the sheaf a \textit{vector bundle}, a structure analogous to connections on Riemannian manifolds. For this reason, the sheaf Laplacian also takes the name \textit{connection Laplacian} \citep{singer_vector_2012}. The product $\mathcal{F}_{u\trianglelefteq e}^T \mathcal{F}_{v\trianglelefteq e}$ is then also orthogonal and is denoted $\mathbf{O}_{uv}$ referring to the transformation a vector undergoes when moved across a manifold via parallel transport. In this case we denote the node-stalk at $v$ by $\mathcal{B}(v)$, the bundle-adjacency by $\mathbf{A}_\mathcal{B}$ and the bundle Laplacian $\mathbf{L}_\mathcal{B}$, and its normalized version $\mathbf{\mathcal{L}}_\mathcal{B}:= \mathbf{I}_{dn\times dn} - \mathbf{D}_{\mathcal{B}}^{-1}\mathbf{A}_{\mathcal{B}}$.

\looseness=-1
Consider a $d$-dimensional vector field over the graph, i.e. a $d$-dimensional feature vector at each node denoted $\mathbf{X}\in\mathbb{R}^{nd}$ in which the signals are stacked column-wise. Similarly to the graph case, the operation  $\mathbf{D}_\mathcal{B}^{-1}\mathbf{A}_\mathcal{B}\mathbf{X}$ is an averaging over the vector field, and $\mathbf{\mathcal{L}}_{\mathcal{B}}$ a measure of smoothness, since:
\vspace{-0.2cm}
\[\left(\mathbf{D}_\mathcal{B}^{-1}\mathbf{A}_\mathcal{B}\mathbf{X}\right)_u = \frac{1}{d_u}\sum_{u:\left(v, u\right)\in\E} \resmap{uv}\mathbf{x}_v \ \in \reals^{d}, \ \text{and} \ \left(\mathbf{\mathcal{L}}_\mathcal{B}\mathbf{X}\right)_u = \frac{1}{d_u}\sum_{u:\left(v, u\right)\in\E}\left( \mathbf{x}_u - \resmap{uv}\mathbf{x}_v\right) \ \in \reals^{d} .\]
\vspace{-0.5cm}
\section{Bundle Neural Networks}\label{sec:modeldef}
In this section, we derive BuNN from heat diffusion equations over flat vector bundles. We then discuss its relationship to GCN \citep{kipf_semi-supervised_2017} and NSD \citep{bodnar_neural_2022}. We provide algorithmic implementations and additional practical details in the Appendix (Section~\ref{app:algimpdet}).

\xhdr{Heat diffusion over bundles} The \emph{bundle Dirichlet energy} $\mathcal{E}_\mathcal{B}(\mathbf{X})$ of a vector field $\mathbf{X} \in \mathbb{R}^{nd}$ is:
\[\mathcal{E}_\mathcal{B}(\mathbf{X}) := \mathbf{X}^T\mathbf{\mathcal{L}}_\mathcal{B}\mathbf{X} = \frac{1}{2}\sum_{(v, u)\in\E} \frac{1}{d_u}\left\| \mathbf{x}_{u} - \resmap{uv}\mathbf{x}_{v}\right\|_2^2.\]
The gradient of the Dirichlet energy $\nabla_{\mathbf{X}}\mathcal{E}_\mathcal{B}(\mathbf{X})$ is the random-walk Laplacian $\mathbf{\mathcal{L}}_\mathcal{B}$. We can write down a {\em heat diffusion equation} over a vector bundle as a gradient flow, whose evolution equation with initial condition $\mathbf{X}(0)=\mathbf{X}$ satisfies $\partial_t \mathbf{X}(t) = -\mathbf{\mathcal{L}}_\mathcal{B}\mathbf{X}(t)$. The solution to this equation can be written using matrix exponentiation as $\mathbf{X}(t)=\exp(-t\mathbf{\mathcal{L}}_\mathcal{B})\mathbf{X}(0)$ (e.g. \citet{hansen_sheaf_2020}). We call the operator $\mathcal{H}_\mathcal{B}(t) := \exp(-t\mathbf{\mathcal{L}}_\mathcal{B}) \in \reals^{nd\times nd}$ the \emph{bundle heat kernel}, which is defined as:
\begin{equation*}\mathcal{H}_\mathcal{B}(t) = \lim_{K \to \infty}\sum_{k = 0}^{K} \frac{\left(-t\mathcal{L}_\mathcal{B}\right)^k}{k!} .
\end{equation*}
Computing the heat kernel is necessary to solve the heat equation. An exact solution can be computed using spectral methods. For small $t$, one can instead consider the truncated Taylor expansion centered at $0$, which amounts to fixing a $K$ in the above equation. \citet{bodnar_neural_2022} instead approximates the solution using the Euler discretization of the heat equation with unit time step.

\looseness=-1
However, all these methods pose a challenge when the bundle structure is learned as in \citet{bodnar_neural_2022}, since the heat kernel has to be recomputed after every gradient update of the bundle structure. This high computational overhead limits the usability of Sheaf Neural Networks in applications.
\begin{figure}
    \centering
    \includegraphics[width=\textwidth]{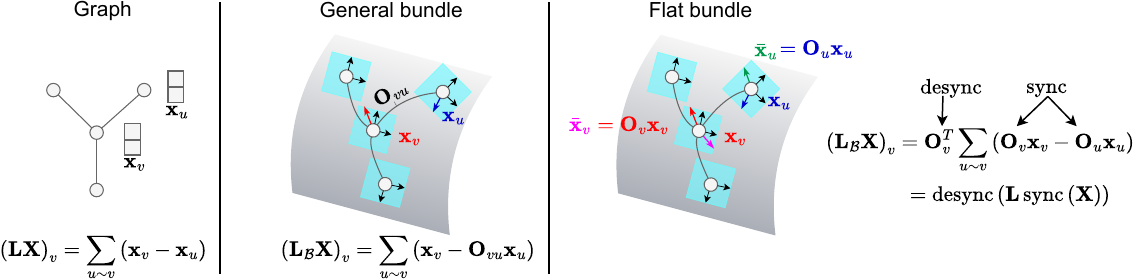}
    \vspace{-0.6cm}
    \caption{Comparison of different Laplacian and their actions on signals.}
    \label{fig:laplacians}
\end{figure}
\looseness=-1
\xhdr{Flat vector bundles} To address the scalability issues of general sheaves and vector bundles, we consider the special case of \textit{flat vector bundles} in which every node $u$ gets assigned an orthogonal map $\mathbf{O}_u$, and every connection factorizes as  $\mathbf{O}_{vu} = \mathbf{O}_v^T\mathbf{O}_u$. Consequently, the bundle Laplacian factors: 
\[\mathbf{\mathcal{L}}_\mathcal{B} = \mathbf{O}^T\left(\mathbf{\mathcal{L}}\otimes \mathbf{I}_d\right)\mathbf{O},\]
where $\mathbf{O}\in \reals^{nd\times nd}$ is block diagonal with $v$-th block being $\mathbf{O}_v$. We call the matrices $\mathbf{O}$ and $\mathbf{O}^T$ synchronization and desynchronization, respectively. We compare different Laplacians in Figure~\ref{fig:laplacians}. This factorization avoids the $\bigO\left(d^3|\E|\right)$ cost of computing the restriction map over each edge. Additionally, Lemma~\ref{lem:newheatdiff} shows that it allows to cast the bundle heat equation into a standard graph heat equation. This reduces the computation of the bundle heat kernel to that of the cheaper graph heat kernel, an operator that does not change depending on the bundle and can, therefore, also be pre-computed.
\begin{lemma}\label{lem:newheatdiff} For every node $v$, the solution at time $t$ of the heat equation on a connected bundle $\gph = \left(\V,\ \E,\ \mathbf{O} \right)$ with input node features $\mathbf{X}$ satisfies: 
\begin{equation*}
    \left(\mathcal{H}_\mathcal{B}(t)\mathbf{X}\right)_{v} = \sum_{u\in\V}\mathcal{H}(t, v, u)\mathbf{O}_{v}^T\mathbf{O}_{u}\mathbf{x}_u,
\end{equation*} where $\mathcal{H}(t)$ is the standard graph heat kernel, and $\mathcal{H}(t,\ v, \ u)\in \reals$ its the entry at $(v, \ u)$.
\end{lemma}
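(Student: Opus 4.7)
The plan is to push the factorization $\mathbf{\mathcal{L}}_\mathcal{B} = \mathbf{O}^T(\mathbf{\mathcal{L}}\otimes \mathbf{I}_d)\mathbf{O}$ through the matrix exponential. The critical observation is that $\mathbf{O}$, being block-diagonal with orthogonal blocks $\mathbf{O}_v$, is itself orthogonal, so $\mathbf{O}^T\mathbf{O}=\mathbf{I}_{nd}$. This makes the factorization a similarity transformation, and similarity transformations commute with analytic functions of a matrix.

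First I would verify by induction on $k$ that
\[
\mathbf{\mathcal{L}}_\mathcal{B}^{\,k} \;=\; \mathbf{O}^T\bigl(\mathbf{\mathcal{L}}\otimes \mathbf{I}_d\bigr)^{k}\mathbf{O},
\]
where the telescoping $\mathbf{O}\mathbf{O}^T = \mathbf{I}$ collapses the middle factors. Substituting into the Taylor series for the bundle heat kernel then gives
\[
\mathcal{H}_\mathcal{B}(t) \;=\; \mathbf{O}^T\exp\!\bigl(-t(\mathbf{\mathcal{L}}\otimes \mathbf{I}_d)\bigr)\mathbf{O}.
\]
Next I would use the standard Kronecker identity $\exp(A\otimes \mathbf{I}_d) = \exp(A)\otimes \mathbf{I}_d$ (immediate from the Taylor expansion since $(A\otimes \mathbf{I}_d)^k = A^k\otimes \mathbf{I}_d$) to rewrite the middle factor as $\mathcal{H}(t)\otimes \mathbf{I}_d$.

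Finally I would extract the $v$-th $d$-dimensional block of $\mathcal{H}_\mathcal{B}(t)\mathbf{X}$. Reading right-to-left: $(\mathbf{O}\mathbf{X})_u = \mathbf{O}_u\mathbf{x}_u$; then the block-scalar action of $\mathcal{H}(t)\otimes \mathbf{I}_d$ gives $\sum_{u}\mathcal{H}(t,v,u)\,\mathbf{O}_u\mathbf{x}_u$ in the $v$-th block; and finally left-multiplication by $\mathbf{O}^T$ pulls out $\mathbf{O}_v^T$ on the $v$-th block, yielding the claimed formula.

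The only genuinely nontrivial point is the factorization identity for $\mathbf{\mathcal{L}}_\mathcal{B}$, but this is granted by the definition of a flat vector bundle together with the block-orthogonality of $\mathbf{O}$; everything else is routine manipulation of Kronecker products and matrix power series. The connectedness hypothesis is not used in the identity itself (it would matter if one discussed the kernel of $\mathbf{\mathcal{L}}_\mathcal{B}$ or the $t\to\infty$ limit, but not for finite $t$), so I would note this in passing and omit it from the argument.
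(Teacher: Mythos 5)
Your proposal is correct and follows essentially the same route as the paper, which simply notes that $\mathbf{\mathcal{L}}_\mathcal{B} = \mathbf{O}^T\left(\mathbf{\mathcal{L}}\otimes \mathbf{I}_d\right)\mathbf{O}$ and that conjugation by the orthogonal matrix $\mathbf{O}$ commutes with the matrix exponential. You merely spell out the details the paper leaves implicit (the telescoping of powers, the Kronecker identity, and the block extraction), and your remark that connectedness is not actually needed for the identity is accurate.
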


\begin{figure}\label{fig:bundexall}
    \centering
    \includegraphics[width=\textwidth]{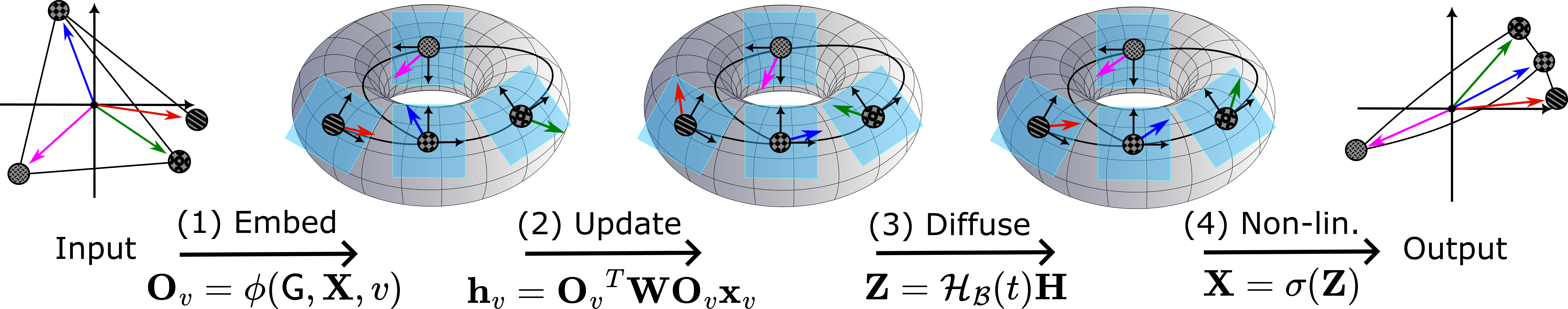}
    \vspace{-0.6cm}
    \caption{Example of the message diffusion framework on a graph with $4$ nodes and $4$ edges. From left to right: The input is a simple graph embedding with each color representing the feature vector at that node. (1) An orthogonal map is computed for each node in the graph by embedding the nodes in a continuous manifold with local reference frames (represented as a torus for visual aid), the features represented as colored vectors do not change. (2) The features are updated using learnable parameters $\mathbf{W}$. (3) The features are diffused for some time $t$ according to the heat equation on the manifold: a larger value of $t$ leads to a higher synchronization between all nodes as illustrated by the alignment of node features with respect to their local coordinates. (4) The output embedding is obtained by discarding the local coordinates and applying a non-linearity.}
    \label{fig:bundle-diagram}
    \vspace{-0.5cm}
\end{figure}

\xhdr{The model} The BuNN layer occurs in four steps, as illustrated in Figure~\ref{fig:bundle-diagram}. First, the bundle maps $\mathbf{O}_v$ are computed using a neural network $\boldsymbol{\phi}$, the graph $\gph$, positional encodings $\mathbf{P}\in \mathbb{R}^{n\times f}$ and the use of Householder reflections \citep{householder_unitary_1958} or direct parameterization of the orthogonal group when $d=2$. Second, an encoder step updates the node signals via a learnable matrix $\mathbf{W}\in\reals^{d\times d}$, and bias $\mathbf{b}\in\reals^d$. Next, the features are diffused over the \emph{learned} vector bundle using the heat kernel. Finally, a non-linearity $\sigma$ is applied. We summarize the steps in the following equations:
\begin{align}
    \mathbf{O}^{(\ell)}_v &:= \boldsymbol{\phi}^{\left(\ell\right)}(\gph, \mathbf{P}, \mathbf{X^{(\ell)}}, v) \ \ \  \forall v\in \V \label{eq:bundlemaps}\\
    \mathbf{h}^{(\ell)}_{v} &:= {\mathbf{O}_v^{\left(\ell\right)}}^T\mathbf{W}^{(\ell)}{\mathbf{O}_v^{\left(\ell\right)}}\mathbf{x}^{(\ell)}_v + \mathbf{b}^{(\ell)} \ \ \ \forall v \in \V \label{eq:update} \\ 
    % \mathbf{Z}^{(\ell+1)} &:= \lim_{t\rightarrow \infty} \exp(-t\mathbf{\mathcal{L}_\mathcal{B}})\mathbf{H}^{(\ell)} \label{eq:diffuse} \\ 
    \mathbf{Z}^{(\ell+1)} &:= \mathcal{H}_\mathcal{B}(t)\mathbf{H}^{(\ell)} \label{eq:diffuse} \\ 
    \mathbf{X}^{(\ell+1)} &:= \sigma\left(\mathbf{Z}^{(\ell+1)}\right) \label{eq:nonlin}
\end{align}
\looseness=-1
The diffusion time $t$ in Equation \ref{eq:diffuse} is a hyperparameter determining the scale at which messages are diffused. For the case of small $t$, we approximate the heat kernel via its truncated Taylor series of degree $K$, and for large $t$, we use spectral methods. For simplicity of exposition, the steps above describe an update given a single bundle (i.e., $c=d$), meaning that $\mathbf{x}_v\in \reals^d$. In general, we allow multiple bundles and vector field channels (Appendix~\ref{app:algimpdet}). 
% Steps \ref{eq:update}, \ref{eq:diffuse}, and \ref{eq:nonlin} can be computed using Taylor approximation in $\bigO\left(d^2|\V| + dK|\E|\right)$ with memory complexity $\bigO\left(d^2|\V| + |\E|\right)$ by casting the diffusion step in terms of the standard graph Laplacian, as detailed in Algorithm~\ref{alg:taylor} in the Appendix.
Note that Equations \ref{eq:bundlemaps}, \ref{eq:update}, and \ref{eq:diffuse} are linear (or affine), and the non-linearities lie in \ref{eq:nonlin}. Equation \ref{eq:update} may be interpreted as a bundle-aware encoder, while Equation \ref{eq:diffuse} is the \emph{message diffusion} step guided by the heat kernel.

Without the bundle structure, Equation \ref{eq:diffuse} would converge exponentially fast to constant node representations over the graph (e.g. Theorem~$1$ in \citet{li_deeper_2018}), potentially leading to over-smoothing. This is a limitation of existing diffusion-based GNNs \citep{xu_graph_2019, zhao_adaptive_2021}. Accordingly, the bundle is crucial in this formulation to prevent node features from collapsing.

% \begin{table}
% % \vspace{-5mm}
% \caption{Comparison between GCN, SNN, and BuNN.}\label{tab:comparison}
% \centering
%     \begin{tabular}{cccc}\\
%      & \textbf{GCN} & \textbf{SNN} & \textbf{BuNN} \\\midrule
%     \textbf{Propagation} & $\mathbf{A}_\gph\mathbf{X}$ & $\mathbf{A}_\mathcal{F}\mathbf{X}$ & $\mathcal{H}_\mathcal{B}(t) \mathbf{X}$ \\  \midrule
%     \textbf{Message type} & \specialcell{Standard\\(Local)} & \specialcell{Standard\\(Local)} & \specialcell{Diffusive\\(Global)} \\  \midrule
%     \textbf{Path independent} & \cmark & \xmark& \cmark\\  \midrule
%     \textbf{Less over-squashing} & \xmark & \xmark& \cmark\\  \midrule
%     \textbf{No over-smoothing} & \xmark & \cmark& \cmark\\ \midrule
%     % Path Independent & \cmark & \xmark& \cmark\\  
%     \end{tabular}
% % \vspace{-3mm}
% \end{table} 

\xhdr{Link to Graph Convolutional Networks} It is possible to derive Graph Convolutional Networks (GCNs) \citep{kipf_semi-supervised_2017} as an approximation of BuNNs operating over a trivial bundle. Setting $t=1$ Equation \ref{eq:diffuse} becomes $\mathbf{Z}^{(l+1)}=\exp(- \mathcal{L}_\gph)\mathbf{H}^{(l)}$. The approximation $\exp(-\mathcal{L}_\gph) \approx \mathbf{I} - \mathcal{L}_\gph$ gives the update $\mathbf{Z}^{(l+1)}=(1 - \mathcal{L}_\gph)\mathbf{H}^{(l)} =  \mathbf{A}_\gph\mathbf{H}^{(l)}$ recovering the GCN update.
% Section~\ref{sec:over-smoothing} shows that BuNNs can avoid over-smoothing, highlighting an important difference between GCNs and BuNNs.

% \begin{wraptable}{r}{9cm}
% % \vspace{-5mm}
% \caption{Comparison between GCN, SNN, and BuNN.\vspace{-5mm}}\label{tab:comparison}
% \begin{tabular}{cccc}\\
%  & \textbf{GCN} & \textbf{SNN} & \textbf{BuNN} \\\midrule
% \textbf{Propagation} & $\mathbf{A}_\gph\mathbf{X}$ & $\mathbf{A}_\mathcal{F}\mathbf{X}$ & $\mathcal{H}_\mathcal{B}(t) \mathbf{X}$ \\  \midrule
% \textbf{Message type} & \specialcell{Standard\\(Local)} & \specialcell{Standard\\(Local)} & \specialcell{Diffusive\\(Global)} \\  \midrule
% \textbf{Path independent} & \cmark & \xmark& \cmark\\  \midrule
% \textbf{Less over-squashing} & \xmark & \xmark& \cmark\\  \midrule
% \textbf{No over-smoothing} & \xmark & \cmark& \cmark\\ \midrule
% % \textbf{Complexity} & $d^2 |\V| + d|\E|$ & $(c^2 + d^3)|\V| + (cd^2 + d^3)|\E|$& $(c^2 + cd)|\V| + c|\E|K$\\ \midrule
% % % Path Independent & \cmark & \xmark& \cmark\\  
% \end{tabular}
% % \vspace{-3mm}
% \end{wraptable} 

\looseness=-1
\xhdr{Comparison with Sheaf Neural Networks}
Flat vector bundles are a special case of \emph{cellular sheaves} \citep{curry_sheaves_2014, bodnar_neural_2022}, meaning that our model has close connections to Sheaf Neural Networks (SNNs) \citep{hansen_sheaf_2020, bodnar_neural_2022, barbero_sheaf_2022, barbero_sheaf_2022-1}. While most SNNs operate on fixed sheaves \citep{hansen_sheaf_2020, barbero_sheaf_2022-1, battiloro_tangent_2023}, we focus on learning sheaves as in Neural Sheaf Diffusion (NSD) from \citet{bodnar_neural_2022}. BuNNs distinguish themselves from NSD in several ways. First, NSD approximates the heat equation using a time-discretized solution to the heat equation, which results in a standard message passing algorithm. In contrast, the direct use of the heat kernel allows BuNNs to \textit{break away from the explicit message-passing paradigm}. This is in line with \citet{battiloro2024tangent} who define convolutions on manifolds using the heat kernel of fixed sheaves. Secondly, the use of flat bundles increases scalability since the bundle maps are computed at the node level. Additionally, flat bundles guarantees \emph{path independence}, a requirement for the theory on the long time limit of NSD in \citet{bodnar_neural_2022} to hold, often not satisfied for general sheaf constructions such as ones used in NSD. Thirdly, we allow $\phi$ to be any GNN while NSD restricts $\phi$ to be an MLP. We found that incorporating the graph structure in $\phi$ improved the experimental results. The update in Equation~\ref{eq:update} is also different to NSD in how the $\mathbf{W}$ and $\mathbf{b}$ are applied, and is necessary to prove our main theoretical result, Theorem~\ref{thm:unifapp}. Additionally, \citet{bodnar_neural_2022} experiments with general restriction maps as opposed to restricting them to orthogonal maps, and find that in practice orthogonal restrictions perform better, hence we consider orthogonal restriction maps. We provide experimental comparisons to NSDs in the experimental section and show that BuNNs significantly outperform their sheaf counterparts. We provide a summarized comparison between GCNs, SNNs, and BuNNs in Table \ref{tab:comparison}.

\looseness=-1
\xhdr{Comparison with other methods} Another paradigm for learning on graphs is using graph transformers (GT), where every nodes communicate to each other through the use of self-attention. When the graph transformer is fully connected, all nodes communicate and therefore GT should not suffer from under-reaching or over-squashing, but might suffer from over-smoothing \citep{dovonon2024setting}. When not fully connected, they might suffer from over-squashing \citep{barbero2024transformers}. Other paradigms such as Implicit Graph Neural Networks \cite{fu2023implicit} are also designed to tackle under-reaching and long-range dependencies while empirically not suffering from over-smoothing.

% \citet{battiloro2024tangent} propose a signal processing framework over tangent bundles, and elegantly derive a neural network from it. They define convolution on the tangent bundle using the heat kernel, similar to the ``diffuse'' step in Equation~\ref{eq:diffuse}. Due to their scalability, the use of flat vector bundles allows us to instantiate such a diffusion-based convolution while simultaneously learning the bundle structure, hence allowing the use of such architectures on arbitrary graph deep learning problems.
\begin{wraptable}{r}{8.5cm}
\vspace{-0.79cm}
\caption{Comparison between models in terms of message type and capability of mitigating issues related to GNNs.}\label{tab:comparison}
\vspace{-0.23cm}
\begin{tabular}{cccc}\\
 & \textbf{GCN} & \textbf{SNN} & \textbf{BuNN} \\\midrule
\textbf{Propagation} & $\mathbf{A}_\gph\mathbf{X}$ & $\mathbf{\mathcal{L}}_\mathcal{F}\mathbf{X}$ & $\mathcal{H}_\mathcal{B}(t) \mathbf{X}$ \\  \midrule
\textbf{Message type} & \specialcell{Standard\\(Local)} & \specialcell{Standard\\(Local)} & \specialcell{Diffusive\\(Global)} \\  \midrule
% \textbf{Path independent} & \cmark & \xmark& \cmark\\  \midrule
\specialcell{\textbf{No} \\ \textbf{under-reaching}} & \xmark & \xmark& \cmark\\  \midrule
\specialcell{\textbf{Alleviates} \\ \textbf{over-squashing}} & \xmark & \xmark& \cmark\\  \midrule
\specialcell{\textbf{Alleviates}\\ \textbf{over-smoothing}} & \xmark & \cmark& \cmark\\ \midrule
% \textbf{Complexity} & $d^2 |\V| + d|\E|$ & $(c^2 + d^3)|\V| + (cd^2 + d^3)|\E|$& $(c^2 + cd)|\V| + c|\E|K$\\ \midrule
% % Path Independent & \cmark & \xmark& \cmark\\  
\end{tabular}
\vspace{-0.5cm}
\end{wraptable} 

\vspace{-0.25cm}
\section{Properties of Bundle Neural Networks}\label{sec:modelproperties}
\vspace{-0.25cm}
\looseness=-1
We now give a formal analysis of the BuNN model. In Section~\ref{sec:over-smoothing}, we derive the fixed points of the bundle heat diffusion, which the subspace of signals towards which solutions converges, and show that even in the limiting case BuNNs can retain information at the node level and therefore avoid over-smoothing. Section~\ref{sec:over-squashing} discusses how our model can capture long-range interactions and mitigate over-squashing.
\vspace{-0.2cm}
\subsection{Fixed points and over-smoothing.}\label{sec:over-smoothing}
\vspace{-0.1cm}
\looseness=-1
A major limitation of MPNNs is \textit{over-smoothing}, where node features become indistinguishable as MPNN depth increases. This phenomenon is a major challenge for training deep GNNs. It arises because the diffusion in MPNNs resembles heat diffusion on graphs \citep{di_giovanni_graph_2022}, which converges to uninformative fixed points\footnote{Up to degree scaling if using the symmetric-normalized Laplacian, see \citet{li_deeper_2018}.}, leading to a loss of information at the node level.
% A significant limitation of MPNNs is the problem of \textit{over-smoothing}, the phenomenon of node features becoming increasingly indistinguishable with the increase of the MPNN's depth. Over-smoothing has been repeatedly highlighted as one of the critical problems in GNNs, making training deep GNNs much more challenging. Such behavior is related to the diffusion used in MPNNs being similar to that of heat diffusion on graphs \citep{di_giovanni_graph_2022}, which converges to uninformative fixed points\footnote{Up to degree scaling if using the symmetric-normalized Laplacian, see \citet{li_deeper_2018}.} resulting in a loss of information at the node level.
% By contrast, our method allows running the diffusion for $t= \infty$, which would correspond to an infinitely-deep traditional MPNN. 
% Heat diffusion over a graph -- or equivalently, a trivial bundle -- results in a constant signal \footnote{Up to degree scaling if using the symmetric-normalized Laplacian, see \citet{li_deeper_2018} Theorem~$1$.} resulting in a loss of information at the node level, often referred to as over-smoothing.
\citet{bodnar_neural_2022} show that the richer bundle structure, however, gives rise to richer limiting behavior. Indeed, by Lemma~\ref{lem:newheatdiff}, since $\lim_{t\rightarrow\infty} \mathcal{H}(t,\ v,\ u) = \frac{d_u}{2|\E|}$, the limit over time of a solution is $\frac{1}{2|E|}\sum_{u\in\V}d_u\mathbf{O}_{v}^T\mathbf{O}_{u}\mathbf{x}_u$. 
% \begin{corollary}\label{cor:limheatdiff} For every node $v$, the limit in time of heat diffusion on a connected bundle $\gph = \left(\V,\ \E,\ \mathbf{O} \right)$ with input node features $\mathbf{X}$ satisfies: 
% \begin{equation}
%     \left(\lim_{t\rightarrow \infty} \mathcal{H}_\mathcal{B}(t)\mathbf{X}\right)_{v} = \frac{1}{2|E|}\sum_{u\in\V}d_u\mathbf{O}_{v}\mathbf{O}_{u}^T\mathbf{x}_u.
% \end{equation}
% \end{corollary}
% In particular, since $\lim_{t\rightarrow\infty} \mathcal{H}(t,\ v,\ u) = \frac{d_u}{2|\E|}$, the limit is $\frac{1}{2|E|}\sum_{u\in\V}d_u\mathbf{O}_{v}\mathbf{O}_{u}^T\mathbf{x}_u$.
To understand the space of fixed points, notice that any $\mathbf{Y}\in \mathbb{R}^{n\times d}$ expressible as $\mathbf{y}_v= \frac{1}{2|E|}\sum_{u\in\V}d_u\mathbf{O}_{v}^T\mathbf{O}_{u}\mathbf{x}_u$ for some $\mathbf{X}$ is a fixed point, and for any two nodes $u,\ v$: \begin{equation}\label{eq:consensus}
    \mathbf{O}_v\mathbf{y}_v = \frac{1}{2|E|} \sum_{w\in\V}d_w \mathbf{O}_{w}\mathbf{x}_w
     = \mathbf{O}_u\mathbf{y}_u.
    \end{equation}
Consequently, the fixed points of vector diffusion have a global geometric dependency where all nodes relate to each other by some orthogonal transformation, e.g. the output in Figure~\ref{fig:bundle-diagram}.
Equation~\ref{eq:consensus} provides insight into the smoothing properties of BuNNs. When the bundle is trivial, the equation reduces to $\mathbf{y}_v = \mathbf{y}_u$, with no information at the node level, i.e. over-smoothing. When it is non-trivial, the output signal can vary across the graph (e.g., $\mathbf{y}_v \neq \mathbf{y}_u$ for two nodes $u$ and $v$). We summarize this in Proposition \ref{prop:signal-energy} and prove an implication in terms of over-smoothing in Proposition~\ref{prop:oversmoothing}. Similar results showing that the signal can survive in deep layers as opposed to resulting in the constant signal have previously been done empirically and theoretically for other architectures, e.g. \cite{chamberlain_beltrami_2021, fu2023implicit}.

\begin{proposition}\label{prop:signal-energy}
    Let $\mathbf{Y}$ be the output of a BuNN layer with $t=\infty$, where $G$ is a connected graph, and the bundle maps are not all equal. Then, there exists $u, v \in \V$ connected such that $\mathbf{y}_v \neq \mathbf{y}_u$.
\end{proposition}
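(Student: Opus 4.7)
The plan is a proof by contradiction that leans on Equation~\ref{eq:consensus}. First I would rewrite that consensus identity as $\mathbf{y}_v = \mathbf{O}_v^T \mathbf{c}$, where
\[
\mathbf{c} \;=\; \frac{1}{2|\E|}\sum_{w\in\V} d_w\, \mathbf{O}_w \mathbf{h}_w \;\in\; \reals^d
\]
is a single vector that does not depend on $v$ (here $\mathbf{h}_w$ denotes the post-encoder features from Equation~\ref{eq:update}). This representation exposes the role of the bundle: the output signal at each node is the node-frame image of one common synchronized vector $\mathbf{c}$.

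Next I would assume for contradiction that $\mathbf{y}_u = \mathbf{y}_v$ for every edge $(u,v)\in\E$. Since $\gph$ is connected, this forces $\mathbf{y}_v$ to be a constant vector $\mathbf{y}\in\reals^d$ across all of $\V$. Plugging this into $\mathbf{y}_v = \mathbf{O}_v^T \mathbf{c}$ (equivalently, reading off Equation~\ref{eq:consensus}) gives $\mathbf{O}_v\mathbf{y} = \mathbf{c}$ for every $v$, and therefore $\mathbf{O}_u\mathbf{y} = \mathbf{O}_v\mathbf{y}$ for every pair $u,v\in\V$.

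Now I would invoke the hypothesis that the bundle maps are not all equal, together with connectedness, to produce an adjacent pair $u_0,v_0\in\V$ with $\mathbf{O}_{u_0}\neq\mathbf{O}_{v_0}$: if every edge had matching endpoints, then propagating equality along any path in the connected graph $\gph$ would make $v\mapsto\mathbf{O}_v$ globally constant, contradicting the hypothesis. For this pair, the identity $\mathbf{O}_{u_0}\mathbf{y} = \mathbf{O}_{v_0}\mathbf{y}$ places $\mathbf{y}$ in the proper subspace $\ker(\mathbf{O}_{u_0}-\mathbf{O}_{v_0})\subsetneq\reals^d$, equivalently forces $\mathbf{c}$ to be a $+1$-eigenvector of the non-identity orthogonal map $\mathbf{O}_{v_0}\mathbf{O}_{u_0}^T$.

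The main obstacle is the quantification over inputs: the consensus relation only yields a kernel constraint, not an outright impossibility, so a degenerate $\mathbf{X}$ (e.g.\ one producing $\mathbf{c}=0$) can make $\mathbf{Y}$ constant even with a non-trivial bundle. I would resolve this either by strengthening the statement to hold for generic inputs (the set of $\mathbf{X}$ with $\mathbf{c}$ lying in the finite union of the relevant kernels is a closed, nowhere-dense subset of $\reals^{nd}$, so its complement is open and dense), or by exhibiting an explicit choice of $\mathbf{X}$ making $\mathbf{c}$ equal to any prescribed vector outside those kernels, which is possible since the map $\mathbf{X}\mapsto\mathbf{c}$ is a surjective linear map from $\reals^{nd}$ to $\reals^d$ whenever at least one $\mathbf{O}_w$ is invertible (which holds, being orthogonal).
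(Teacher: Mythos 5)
Your proposal follows essentially the same route as the paper's proof: both reduce to the consensus identity $\mathbf{O}_u\mathbf{y}_u = \mathbf{O}_v\mathbf{y}_v$ (zero bundle Dirichlet energy of the $t=\infty$ fixed point, i.e.\ Equation~\ref{eq:consensus}) and then argue that unequal bundle maps force unequal outputs. Where you go further is in the two places the paper's argument is loose, and both of your additions are genuinely needed. First, the paper's proof picks an arbitrary pair with $\mathbf{O}_u\neq\mathbf{O}_v$ and never produces the \emph{adjacent} pair the statement asks for; your path-propagation argument through the connected graph fixes that. Second, and more importantly, the inference from $\mathbf{O}_u\mathbf{y}_u=\mathbf{O}_v\mathbf{y}_v$ and $\mathbf{O}_u\neq\mathbf{O}_v$ to $\mathbf{y}_u\neq\mathbf{y}_v$ fails exactly on the kernel $\ker(\mathbf{O}_u-\mathbf{O}_v)$ that you identify (e.g.\ $\mathbf{c}=\mathbf{0}$ gives a constant $\mathbf{Y}$ for any bundle); the paper's proof concedes this only with the phrase ``except in degenerate cases,'' and its appendix restatement of the proposition quietly appends ``almost always.'' Your two proposed repairs --- restating for generic inputs, or exhibiting an input with $\mathbf{c}$ outside the finitely many relevant kernels --- are the right ways to close this gap. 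One small caveat on the second repair: the map $\mathbf{X}\mapsto\mathbf{c}$ is affine with linear part factoring through $\mathbf{W}$ (since $\mathbf{O}_w\mathbf{h}_w=\mathbf{W}\mathbf{O}_w\mathbf{x}_w+\mathbf{O}_w\mathbf{b}$), so surjectivity requires $\mathbf{W}$ to be invertible, not merely the orthogonality of the $\mathbf{O}_w$; since the proposition leaves the layer parameters free this is harmless, but it should be stated.
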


\begin{proposition}\label{prop:oversmoothing}
    Let $\mathbf{X}$ be $2$-dimensional features on a connected $\gph$, and assume that there are two nodes $u,\ v$ with $\mathbf{p}_u\neq \mathbf{p}_v$. Then there is an $\delta>0$ such that for every $K$, there exist a $K$-layer deep BuNN with $2$ dimensional stalks, $t=\infty$, ReLU activation, $\phi^{(\ell)}$ is a node-level MLP, and $\|\mathbf{W}^{(\ell)}\|\leq 1$, with output $\mathbf{Y}$ such that $\|\mathbf{y}_v - \mathbf{y}_u\|>\delta$.
\end{proposition}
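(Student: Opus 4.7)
The plan is to neutralize the layer-to-layer feature dynamics so that the final output depends only on the last layer's bundle maps and bias, and then calibrate that last layer to realise a node-dependent fixed pattern. Concretely, I would set $\mathbf{W}^{(\ell)}=\mathbf{0}$ for every $\ell\in\{1,\dots,K\}$, which trivially satisfies $\|\mathbf{W}^{(\ell)}\|\le 1$. Then the encoder step in Equation~\ref{eq:update} collapses to $\mathbf{h}^{(\ell)}_v=\mathbf{b}^{(\ell)}$ at every node, and by Lemma~\ref{lem:newheatdiff} with $\mathcal{H}(\infty,v,u)=d_u/(2|\E|)$ the diffusion step at $t=\infty$ gives
\[
\mathbf{z}^{(\ell+1)}_v \;=\; \mathbf{O}^{(\ell)\,T}_{\,v}\,\bar{\mathbf{b}}^{(\ell)},\qquad \bar{\mathbf{b}}^{(\ell)}\;:=\;\frac{1}{2|\E|}\sum_{u\in\V}d_u\,\mathbf{O}^{(\ell)}_{\,u}\,\mathbf{b}^{(\ell)}.
\]
In particular $\mathbf{y}_v=\mathbf{x}^{(K)}_v=\sigma\bigl(\mathbf{O}^{(K)\,T}_{\,v}\,\bar{\mathbf{b}}^{(K)}\bigr)$ depends only on the final layer's choices; the input features and all earlier layers become irrelevant.

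The next step is to calibrate layer $K$ to produce a constant gap. Since $\mathbf{p}_u\neq \mathbf{p}_v$, a node-level MLP $\phi^{(K)}$ can interpolate any desired orthogonal maps on the finitely many distinct positional encodings appearing in $\gph$. Take $\mathbf{O}^{(K)}_u=\mathbf{R}_{\pi/2}$ (the $2\times 2$ counterclockwise rotation) and $\mathbf{O}^{(K)}_w=\mathbf{I}$ for every node $w$ whose positional encoding differs from $\mathbf{p}_u$, so in particular $\mathbf{O}^{(K)}_v=\mathbf{I}$. Writing $a=\sum_{w:\,\mathbf{p}_w\neq\mathbf{p}_u}d_w$ and $b=\sum_{w:\,\mathbf{p}_w=\mathbf{p}_u}d_w$, both positive since $\gph$ is connected, the matrix $\sum_w d_w\mathbf{O}^{(K)}_w=a\mathbf{I}+b\mathbf{R}_{\pi/2}$ has determinant $a^2+b^2>0$ and is therefore invertible, so one can pick a bias $\mathbf{b}^{(K)}$ making $\bar{\mathbf{b}}^{(K)}=(1,1)^T$. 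Plugging in gives
\[
\mathbf{y}_v=\sigma\bigl((1,1)^T\bigr)=(1,1)^T,\qquad \mathbf{y}_u=\sigma\bigl(\mathbf{R}_{\pi/2}^T(1,1)^T\bigr)=\sigma\bigl((1,-1)^T\bigr)=(1,0)^T,
\]
so $\|\mathbf{y}_v-\mathbf{y}_u\|=1$ and $\delta:=1/2$ works.

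The construction is uniform in $K$: the prescription ``$\mathbf{W}^{(\ell)}=\mathbf{0}$ for all $\ell$, any node-level MLP $\phi^{(\ell)}$ and bias $\mathbf{b}^{(\ell)}$ for $\ell<K$, and the calibrated $\phi^{(K)},\mathbf{b}^{(K)}$ at the top'' defines a valid $K$-layer BuNN for every $K\ge 1$, all producing the same gap. The proof presents no real obstacle: the operator-norm constraint on $\mathbf{W}$ is sidestepped by taking $\mathbf{W}=\mathbf{0}$; the node-level MLP requirement is met by elementary interpolation of finitely many distinct positional encodings (e.g.\ a two-hidden-layer ReLU network with outputs parameterising $\mathrm{SO}(2)$ via an angle, as allowed for $d=2$); and the $t=\infty$ assumption together with Lemma~\ref{lem:newheatdiff} supplies a closed-form output. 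The underlying reason oversmoothing is avoided---namely, that $\mathbf{O}_v^T\bar{\mathbf{b}}$ is genuinely node-dependent whenever the bundle maps vary across nodes---is already the content of Equation~\ref{eq:consensus} and Proposition~\ref{prop:signal-energy}.
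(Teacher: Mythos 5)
Your proof is correct, and it takes a genuinely different route from the paper's. The paper works at the \emph{first} layer: it assigns the swap matrix to $u$ and the identity elsewhere, and then splits into two cases --- if the diffused input signal $\mathbf{h}$ has a nonzero coordinate it keeps that coordinate alive with a rank-one $\mathbf{W}^{(1)}$ (so $\delta$ depends on $\mathbf{X}$), and only if the signal vanishes does it fall back on a pure-bias construction; the remaining $K-1$ layers are then set to act as the identity on the resulting fixed point. You instead set $\mathbf{W}^{(\ell)}=\mathbf{0}$ at \emph{every} layer, which makes all layers but the last irrelevant, and engineer only the final layer's bundle maps and bias. Both arguments rest on the same two ingredients (the closed form of the $t=\infty$ kernel from Lemma~\ref{lem:newheatdiff} / Equation~\ref{eq:consensus}, and interpolation of finitely many distinct positional encodings by a node-level MLP), but your version buys a cleaner statement: no case split, a $\delta=1/2$ that is uniform in $\mathbf{X}$ and $K$, and --- via the invertibility of $a\mathbf{I}+b\mathbf{R}_{\pi/2}$ --- an explicit bias that forces $\bar{\mathbf{b}}^{(K)}=(1,1)^T$ exactly. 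This last point actually patches a gap in the paper's own Case~2, which asserts the post-diffusion signal equals $\mathbf{b}$ at nodes $w\neq u$ and $\mathbf{O}_u^T\mathbf{b}$ at $u$; in fact diffusion replaces $\mathbf{b}$ by the degree-weighted average $(\alpha,\beta)^T$ with $\alpha=\tfrac{1}{2|\E|}\sum_{w\neq u}d_w$, $\beta=\tfrac{d_u}{2|\E|}$, and the resulting gap $\sqrt{2}\,|\alpha-\beta|$ can degenerate when $d_u=\sum_{w\neq u}d_w$ --- a degeneracy your calibration of $\mathbf{b}^{(K)}$ avoids. The one thing the paper's Case~1 delivers that yours does not is a construction in which the deep network's output actually retains information from the input features, which is closer to the over-smoothing narrative; but the formal statement only asks for output separation, so your argument fully suffices. (Minor notational point: with the paper's layer indexing the output of a $K$-layer network is $\mathbf{X}^{(K+1)}$, not $\mathbf{X}^{(K)}$.)
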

% \begin{proposition}\label{prop:signal-energy}
%     Let $\mathbf{Y}$ be the output of a BuNN layer with $t=\infty$, where $G$ is a connected graph and the bundle maps are not all equal. Then, there exists $u, v \in \V$ connected such that $\mathbf{y}_v \neq \mathbf{y}_u$.
% \end{proposition}

\subsection{Over-squashing and long range interactions.}\label{sec:over-squashing}
\looseness=-1
While message-passing in MPNNs constitutes a strong inductive bias, it is problematic when the task requires the MPNN to capture interactions between distant nodes. These issues have been attributed mainly to the \emph{over-squashing} problem. \citet{topping_understanding_2022} and \citet{di_giovanni_over-squashing_2023} formalize over-squashing through a sensitivity analysis, giving upper bounds on the sensitivity of the output at a node with respect to the input at an other. In particular, they show that under weak assumptions on the message function, the Jacobian of an $\mathrm{MPNN}$ satisfies the following inequality
\begin{equation}\label{eq:mpnnjacobian}
|\partial \left(\mathrm{MPNN}_\Theta(\mathbf{X})\right)_u / \partial \mathbf{x}_v| \leq c^\ell\left(\mathbf{A}^{\ell}\right)_{uv}, \end{equation}
for any nodes $u, \ v$, where $\ell$ is the depth of the network and $c$ is a constant. Given two nodes $u, v$ at a distance $r$, message-passing will require at least $r$ layers for the two nodes to communicate and overcome \emph{under-reaching}. If $\ell$ is large, distant nodes can communicate, but \citet{di_giovanni_over-squashing_2023} show that over-squashing becomes dominated by vanishing gradients. Building on top of such sensitivity analysis, we compute the Jacobian for a BuNN layer in Lemma~\ref{lem:jacobian}.
\begin{lemma}\label{lem:jacobian}
    Let BuNN be a linear layer defined by Equations~\ref{eq:bundlemaps}, \ref{eq:update} \&~\ref{eq:diffuse} with hyperparameter $t$. Then, for any connected graph and nodes $u,\ v$, we have
    \begin{equation*}
        \frac{\partial\left( \operatorname{BuNN}\left(\mathbf{X}\right)\right)_u}{\partial \mathbf{x}_v}  = \mathcal{H}(t, u, v) \mathbf{O}_{u}^T \mathbf{W} \mathbf{O}_{v}. 
    \end{equation*}
    % and therefore when $t\rightarrow \infty$
    % \begin{equation*}
    % \left|\frac{\partial\left( \operatorname{BuNN}\left(\mathbf{X}\right)\right)_u}{\partial \mathbf{x}_v}\right| = \frac{d_uw}{2|\E|}.
    % \end{equation*}
\end{lemma}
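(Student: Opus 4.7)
The plan is to unroll the linear BuNN layer using Lemma~\ref{lem:newheatdiff}, simplify via orthogonality, and then differentiate term-by-term. Following the convention of standard sensitivity analyses for over-squashing \citep{topping_understanding_2022, di_giovanni_over-squashing_2023}, the bundle maps produced by $\boldsymbol{\phi}$ are treated as fixed during the Jacobian computation, so that the resulting expression isolates the sensitivity contributed by the update plus diffusion pipeline. The linearity assumption in the hypothesis (i.e., omitting Equation~\ref{eq:nonlin}) is what makes the layer exactly affine in the inputs and removes any chain-rule factor from $\sigma$.

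First I would apply Lemma~\ref{lem:newheatdiff} to the diffusion step $\mathbf{Z}=\mathcal{H}_\mathcal{B}(t)\mathbf{H}$ to write
\[
(\operatorname{BuNN}(\mathbf{X}))_u \;=\; \sum_{w\in\V}\mathcal{H}(t,u,w)\,\mathbf{O}_u^T\mathbf{O}_w\,\mathbf{h}_w,
\]
then substitute the encoder update $\mathbf{h}_w = \mathbf{O}_w^T\mathbf{W}\mathbf{O}_w\mathbf{x}_w + \mathbf{b}$ from Equation~\ref{eq:update}. Using orthogonality $\mathbf{O}_w\mathbf{O}_w^T = \mathbf{I}_d$, the middle pair collapses and the expression reduces to the explicit affine form
\[
(\operatorname{BuNN}(\mathbf{X}))_u \;=\; \sum_{w\in\V}\mathcal{H}(t,u,w)\,\mathbf{O}_u^T\mathbf{W}\mathbf{O}_w\,\mathbf{x}_w \;+\; \mathbf{O}_u^T\!\left(\sum_{w\in\V}\mathcal{H}(t,u,w)\,\mathbf{O}_w\right)\!\mathbf{b}.
\]

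Differentiating with respect to $\mathbf{x}_v$ annihilates the bias term and leaves only the $w=v$ summand of the first sum, yielding exactly $\mathcal{H}(t,u,v)\,\mathbf{O}_u^T\mathbf{W}\mathbf{O}_v$, as claimed. I do not anticipate any substantive obstacle here: the argument is a direct computation. The only subtlety worth flagging is that orthogonality must be invoked \emph{before} differentiating, so that the $\mathbf{O}_w^T$ sitting inside $\mathbf{h}_w$ cancels cleanly against the $\mathbf{O}_w$ that Lemma~\ref{lem:newheatdiff} places immediately to its left; and the assumption that the layer is linear is essential, so that the chain rule does not introduce an additional diagonal Jacobian factor coming from $\sigma$.
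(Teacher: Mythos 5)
Your proposal is correct and follows essentially the same route as the paper's proof: invoke Lemma~\ref{lem:newheatdiff} to write the layer output as $\sum_{w}\mathcal{H}(t,u,w)\,\mathbf{O}_u^T\mathbf{O}_w\mathbf{h}_w$, substitute the encoder update, cancel $\mathbf{O}_w\mathbf{O}_w^T=\mathbf{I}$, and differentiate with the bundle maps held independent of $\mathbf{X}$ so that only the $w=v$ term survives. Your explicit remarks on invoking orthogonality before differentiating and on linearity removing the $\sigma$ chain-rule factor are consistent with (and slightly more careful than) the paper's two-line computation.
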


The form of the Jacobian in Lemma \ref{lem:jacobian} differs significantly from the usual form in Equation~\ref{eq:mpnnjacobian}. First, all nodes communicate in a single BuNN layer since $\mathcal{H}(t, u, v)>0$ for all $u, v$, and $t$, allowing for \emph{direct pair-wise communication between nodes}, making a BuNN layer operate globally similarly to Transformer model, and therefore overcome under-reaching. Secondly, taking $t$ to be large allows BuNNs to operate on a larger scale, allowing stronger communication between distant nodes and overcoming over-squashing without the vanishing gradient problem. 

\looseness=-1
Further, Lemma~\ref{lem:jacobian} gives a finer picture of the capabilities of BuNNs. For example, to mitigate over-squashing a node may decide to ignore information received from certain nodes while keeping information received from others. This allows the model to reduce the receptive field of certain nodes:

\begin{corollary}\label{cor:squashing}
    Consider n nodes $u,\ v, \ \text{and} \  w_i, \text{ for} \ i=1, \ldots n-2$, of a connected graph with $2$ dimensional bundle such that $\mathbf{p}_v\neq \mathbf{p}_{w_i}$ and $\mathbf{p}_u\neq \mathbf{p}_{w_i} \ \forall i$. Then in a BuNN layer with MLP $\phi$, at a given channel, the node $v$ can learn to ignore the information from all $w_is$ while keeping information from $u$.
\end{corollary}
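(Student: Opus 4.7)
The plan is to apply Lemma~\ref{lem:jacobian} directly and translate the corollary into a simple algebraic condition on $\mathbf{W}$ and the orthogonal maps, then exhibit explicit parameters and an MLP $\boldsymbol{\phi}$ that satisfy it. By Lemma~\ref{lem:jacobian}, the sensitivity of the output at $v$ to the $k$-th input channel at a node $w$ is the vector $\mathcal{H}(t,v,w)\,\mathbf{O}_v^T\mathbf{W}\mathbf{O}_w\mathbf{e}_k$. Connectedness of $\gph$ gives $\mathcal{H}(t,v,w_i)>0$ for every $i$, and since $\mathbf{O}_v^T$ is orthogonal hence invertible, the statement ``$v$ ignores the $k$-th channel of $w_i$'' becomes $\mathbf{O}_{w_i}\mathbf{e}_k\in\ker(\mathbf{W})$, and ``$v$ keeps the $k$-th channel of $u$'' becomes $\mathbf{O}_u\mathbf{e}_k\notin\ker(\mathbf{W})$.

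I would choose $\mathbf{W}\in\mathbb{R}^{2\times 2}$ of rank one, e.g.\ $\mathbf{W}=\mathbf{a}\mathbf{b}^T$, so that $\ker(\mathbf{W})$ is a single line $\mathrm{span}(\mathbf{n})$ with $\mathbf{n}:=\mathbf{b}^{\perp}$. Using the direct parameterisation of $O(2)$ by an angle mentioned in the model description, I would set each $\mathbf{O}_{w_i}$ to the common rotation sending $\mathbf{e}_k$ to $\mathbf{n}$, and $\mathbf{O}_u$ to any rotation sending $\mathbf{e}_k$ off $\mathrm{span}(\mathbf{n})$ (for instance, to $\mathbf{b}$ itself); the value of $\mathbf{O}_v$ is immaterial. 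This satisfies both required algebraic conditions, and the fact that the same rotation works for every $w_i$ means a single node-wise rule in $\boldsymbol{\phi}$ handles the whole family.

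It remains to realise these assignments by an MLP. By hypothesis the finite sets $\{\mathbf{p}_u,\mathbf{p}_v\}$ and $\{\mathbf{p}_{w_i}\}$ are disjoint, so the universal approximation theorem for MLPs yields a node-wise $\boldsymbol{\phi}$ that outputs the prescribed angle at every $\mathbf{p}_{w_i}$ and a different prescribed angle at $\mathbf{p}_u$ (with arbitrary value at $\mathbf{p}_v$); composing with the angle-to-rotation map gives the required orthogonal maps. The only genuinely technical step is the reduction above, which hinges on the fact that in dimension two the orthogonal group acts transitively on the unit circle, so the $\mathbf{O}_w$'s offer exactly enough freedom to route $\mathbf{O}_{w_i}\mathbf{e}_k$ into the one-dimensional $\ker(\mathbf{W})$ while simultaneously keeping $\mathbf{O}_u\mathbf{e}_k$ outside it; this is what distinguishes the $d=2$ flat-bundle setting from a plain MPNN, in which no such per-channel gating across equidistant nodes is available.
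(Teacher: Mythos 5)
Your proof is correct, and it takes a genuinely different route from the paper's --- in fact, on inspection, the only workable one. You gate a \emph{column} of the Jacobian: by Lemma~\ref{lem:jacobian} the sensitivity of $\mathbf{y}_v$ to the $k$-th input channel at $w$ is $\mathcal{H}(t,v,w)\,\mathbf{O}_v^T\mathbf{W}\left(\mathbf{O}_w\mathbf{e}_k\right)$, and since $\mathbf{O}_w\mathbf{e}_k$ genuinely varies with $w$, a rank-one $\mathbf{W}$ together with the transitivity of $O(2)$ on the unit circle lets you place $\mathbf{O}_{w_i}\mathbf{e}_k$ inside $\ker\mathbf{W}$ while keeping $\mathbf{O}_u\mathbf{e}_k$ outside it. The paper instead gates a \emph{row}: it aims for $\partial\mathbf{y}^{(1)}_v/\partial\mathbf{x}_{w_i}=(0 \ 0)$ while $\partial\mathbf{y}^{(1)}_v/\partial\mathbf{x}_u\neq(0 \ 0)$, taking $\mathbf{O}_v=\mathbf{O}_u$ to be the coordinate swap, $\mathbf{O}_{w_i}=\mathbf{I}$, and $w_{21}=w_{22}=0$. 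That version cannot succeed: the first row of $\mathcal{H}(t,v,w)\,\mathbf{O}_v^T\mathbf{W}\mathbf{O}_w$ equals $\mathcal{H}(t,v,w)\,\mathbf{r}^T\mathbf{O}_w$ with $\mathbf{r}^T=\mathbf{e}_1^T\mathbf{O}_v^T\mathbf{W}$ independent of $w$, and right-multiplication by an orthogonal matrix preserves the norm, so this row vanishes for one node iff it vanishes for all of them. Concretely, the paper's displayed product $\mathbf{O}_v^T\mathbf{W}\mathbf{O}_u$ has its off-diagonal entries transposed; computed correctly it is $\left(\begin{smallmatrix}w_{22}&w_{21}\\ w_{12}&w_{11}\end{smallmatrix}\right)$, whose first row under $w_{21}=w_{22}=0$ is also $(0 \ 0)$, so the paper's choice makes $v$'s first output channel ignore $u$ as well. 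Your column-based reading of ``at a given channel'' is therefore the one under which the corollary actually holds, and your reduction to the kernel condition is clean; what your approach buys is precisely that $\mathbf{O}_w$ acts on the \emph{source} side of $\mathbf{W}$, where it can move vectors into or out of a fixed kernel, rather than on the side where orthogonality washes out. The remaining step --- realizing finitely many prescribed angles by an MLP on the disjoint finite sets $\{\mathbf{p}_u\}$ and $\{\mathbf{p}_{w_i}\}$, with all $w_i$ receiving the same angle and $\mathbf{O}_v$ immaterial --- is at the same level of rigor as the paper's own appeal to universality and is unproblematic.
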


\section{Expressivity of the model}\label{sec:expressiveness}
\looseness=-1
We now characterize the expressive power of BuNNs from a feature transformation perspective. This analysis extends that of \citet{bodnar_neural_2022}, who show that heat diffusion on sheaves can be expressive enough to linearly separate nodes in the infinite time limit. Instead, our results concern the more challenging problem of parameterizing arbitrary feature transformations and hold for finite time. 

Most work on GNN expressivity characterize the ability of GNNs to distinguish isomorphism classes of graphs or nodes \citep{xu_how_2019, morris_weisfeiler_2019, azizian_expressive_2020, geerts_expressiveness_2022} or equivalently to approximate functions on them \citep{chen_equivalence_2019}. These results typically rely on two major assumptions: 1) node features are fixed for each graph or are ignored completely; and 2) apply to a single graph or finitely many graphs of bounded size. Instead, our setting 1) includes features ranging in an infinite uncountable domain, and 2) includes infinite families of graphs. To this end, we define the notion of \emph{compact uniform approximation} as a modification to that of uniform approximation from \citet{rosenbluth_might_2023} and we discuss our choice in Appendix~\ref{app:univapprox}.

\begin{definition}\label{def:compunif}
    Let $\mathcal{F}\subseteq \mathcal{C}(\mathcal{G}, \reals^c, \reals^{c'})$ be a set of feature transformations over a family of graphs $\mathcal{G}$, and let $H\in \mathcal{C}(\mathcal{G}, \reals^c, \reals^{c'})$ a feature transformation over $\mathcal{G}$. We say that $\mathcal{F}$ \textit{compactly uniformly approximates} $H$, if for all finite subsets $\mathcal{K} \subseteq \mathcal{G}$, for all compact $K \subset \mathbb{R}^c$, and for all $\epsilon > 0$, there exists an $F \in \mathcal{F}$ such that for all $\gph \in \mathcal{K}$ and $\mathbf{X} \in K^{n_\gph}$, we have that $||F_{\gph}(\mathbf{X}) - H_{\gph}(\mathbf{X})||_\infty \leq \epsilon.$
\end{definition}

\xhdr{Injective Positional Encodings (PEs)} In the case of a finite collection of graphs $\mathcal{G}$ and a fixed finite feature space $K$, the above definition reduces to the setting of Theorem 2 in \citet{morris_weisfeiler_2019}, since the node features are fixed to the finitely many values in $K$. However, when $K$ is not finite, the arguments in \citet{morris_weisfeiler_2019} no longer work, as detailed in Appendix~\ref{app:revrep}. Consequently, Definition~\ref{def:compunif} subsumes graph-isomorphism testing, and it is, therefore, too strong for a polynomial-time GNN to satisfy. Hence, we will assume that the GNN has access to injective positional encodings, which we formally define in Definition~\ref{def:posenc}. This allows us to bypass the graph-isomorphism problem, while not trivializing the problem as we show next.

\looseness=-1
\xhdr{Negative results with injective PE} Characterizing the expressive power of GNNs in the uniform setting is an active area of research, with mostly negative results. \citet{rosenbluth_distinguished_2023} prove negative results in the non-compact setting for both MPNNs with virtual-node and graph transformers, even with injective positional encodings. While extending their result to the compact setting is outside the scope of our work, we prove in Proposition~\ref{prop:mpnn-uniform-approx} a negative result for bounded-depth MPNNs with injective PEs. Indeed, fixing the depth of the MPNNs to $\ell$, we can take a single compactly featured graph $\gph\in \mathcal{G}$ with a diameter larger than $\ell$. As there is a node whose receptive field does not include all nodes in such a $\gph$, the architecture cannot uniformly approximate every function on $\mathcal{G}$.\footnote{This phenomenon in MPNNs is often called `under-reaching'.} 
\begin{proposition}\label{prop:mpnn-uniform-approx}
    There exists a family $\mathcal{G}$ consisting of connected graphs such that bounded-depth MPNNs are not compact uniform approximators, even if enriched with unique positional encoding.
\end{proposition}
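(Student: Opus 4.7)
The plan is to exploit the inherent locality of bounded-depth MPNNs by choosing a family $\mathcal{G}$ containing graphs of arbitrarily large diameter, together with a continuous target transformation $H$ that genuinely requires global information. A natural choice is to let $\mathcal{G} = \{P_n\}_{n \geq 1}$ be the family of path graphs, and take $H$ to be the ``sum broadcast'' feature transformation defined by $H_{P_n}(\mathbf{X})_v := \sum_{u \in V(P_n)} x_u$ for every node $v$. This $H$ is continuous, permutation-equivariant, and by construction depends non-trivially on all node features.

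Given any depth bound $\ell$, I would take $n = 2\ell + 3$ so that the middle node $v$ of $P_n$ has an $\ell$-hop receptive field that excludes at least one node $w$. A standard induction on the number of message-passing rounds shows that, for any MPNN of depth $\ell$, the output at $v$ is a function only of the initial features and positional encodings of nodes at graph distance at most $\ell$ from $v$. Then, setting $\mathcal{K} = \{P_n\}$, $K = [0,1] \subset \mathbb{R}$, and $\epsilon = 1/3$, I would compare the inputs $\mathbf{X} \equiv \mathbf{0}$ and $\mathbf{X}'$ that agrees with $\mathbf{X}$ everywhere except at $w$, where $x'_w = 1$. Both lie in $K^n$. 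For any depth-$\ell$ MPNN $F$ we have $F_{P_n}(\mathbf{X})_v = F_{P_n}(\mathbf{X}')_v$, whereas $H_{P_n}(\mathbf{X})_v - H_{P_n}(\mathbf{X}')_v = -1$, so the triangle inequality rules out a simultaneous uniform $\epsilon$-approximation at $\mathbf{X}$ and $\mathbf{X}'$ at node $v$. Since $\ell$ was arbitrary, bounded-depth MPNNs cannot compact-uniformly approximate $H$ on $\mathcal{G}$.

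The only subtlety, and the one step deserving extra care, is to justify that injective positional encodings cannot rescue the architecture. PEs enter the model as additional node features: they can break symmetries and make all nodes distinguishable, but they do not extend the message-passing receptive field. A perturbation of the raw feature at $w$ still cannot propagate to $v$ in $\ell$ rounds of message passing, so the locality argument above applies verbatim for every PE scheme. This is precisely the formal incarnation of the ``under-reaching'' phenomenon mentioned in the footnote, and it is the only conceptual obstacle; the rest of the argument is a short locality and triangle-inequality calculation.
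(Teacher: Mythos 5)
Your proposal is correct and follows essentially the same route as the paper's proof: both pick a family of connected graphs with unbounded diameter, choose a graph whose diameter exceeds the depth bound, and observe that the output at a node $v$ is insensitive to the input at a node $w$ outside its receptive field, so no target depending on $w$ can be approximated, with positional encodings powerless because they do not enlarge the receptive field. Your version is simply more explicit (paths instead of grids, a concrete sum-broadcast target, and an explicit pair of inputs with the triangle-inequality step), which fills in details the paper leaves as a sketch.
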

\vspace{-0.2cm}

\xhdr{Universality of BuNN} In contrast to the negative results above, we now show that when equipped with injective PEs, BuNNs are universal with respect to compact uniform approximation. To the best of our knowledge, Theorem~\ref{thm:unifapp} is the first such positive feature approximation result for a GNN architecture, which demonstrates the remarkable modelling capabilities of BuNNs.

% showed that even with injective positional encodings, there are functions that MPNNs with virtual-nodes cannot approximate while transformer can, and vice-sersa. 

% It was shown in \citet{rosenbluth_distinguished_2023} that \emph{even with injective Positional Encodings (PEs)}, there are functions that MPNNs with virtual-nodes cannot approximate uniformly while transformers can, and vice-versa. Instead, we prove that our model is compactly uniformly universal given injective PEs. To the best of our knowledge, this is the first positive uniform approximation result for a GNN, demonstrating the remarkable modeling capabilities of BuNNs.
% making it the first model to achieve such universality. 
\begin{theorem}\label{thm:unifapp}
    Let $\mathcal{G}$ be a possibly infinite set of connected graphs equipped with injective positional encodings. Then $2$-layer deep BuNNs with encoder/decoder at each layer and $\phi^{(1)}$, $\phi^{(2)}$ being $2$-layer deep MLP have compact uniform approximation over $\mathcal{G}$.
\end{theorem}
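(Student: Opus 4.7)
The plan is to split the work across the two BuNN layers: layer 1 uses the flat-bundle heat kernel together with carefully routed orthogonal maps to \emph{gather} every input feature $\mathbf{x}_u$ into each node $v$'s representation in a distinct subspace, and layer 2 acts as a node-wise MLP on this aggregated representation. The injective PE, combined with MLP universality for $\phi^{(1)}$ and $\phi^{(2)}$, provides enough continuous control on both bundle structures to realize any target feature transformation.

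\textbf{Layer 1 (gathering).} Fix a finite $\mathcal{K}\subseteq\mathcal{G}$, a compact $K\subset\mathbb{R}^c$, and $\epsilon>0$; set $N=\max_{\gph\in\mathcal{K}}n_\gph$. Since the PE is injective on $\mathcal{K}$, it continuously induces an ordering $\pi_\gph:\V\to\{0,\ldots,N-1\}$ of the nodes of each $\gph\in\mathcal{K}$. Choose stalk dimension $d\geq Nc$, have the encoder pad $\mathbf{x}_v\mapsto \tilde{\mathbf{x}}_v$ into the first $c$-block of $\mathbb{R}^d$, and use the Householder parameterization of $O(d)$ together with MLP universality to make $\phi^{(1)}$ output the block-permutation $P_v$ that moves block $0$ to block $\pi_\gph(v)$. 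With $\mathbf{W}^{(1)}=\mathbf{I}$ and $\mathbf{b}^{(1)}=\mathbf{0}$, Lemma~\ref{lem:newheatdiff} gives
\[
    \mathbf{z}_v^{(2)} \;=\; P_v^T\sum_{u\in\V}\mathcal{H}(t,v,u)\,P_u\tilde{\mathbf{x}}_u,
\]
whose blocks (after undoing the known rotation $P_v^T$) are exactly $\mathcal{H}(t,v,u)\mathbf{x}_u$. Because $\mathcal{H}(t,v,u)>0$ on connected graphs and depends continuously on the PE, the full feature matrix $\mathbf{X}$ is continuously decodable from $(\mathbf{p}_v,\mathbf{z}_v^{(2)})$; a positive bias shift before the ReLU preserves this information through the nonlinearity.

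\textbf{Layer 2 (node-wise transformation).} By injectivity of the PE, the target $(H_\gph(\mathbf{X}))_v$ is a continuous function of $(\mathbf{p}_v,\mathbf{X})$, and therefore through the layer-1 decoding a continuous function $F(\mathbf{p}_v,\mathbf{x}_v^{(1)})$ on a compact domain. Take the second-layer diffusion time $t_2$ small so that $\mathcal{H}_\mathcal{B}(t_2)\approx\mathbf{I}$, reducing the layer to the pointwise map $\mathbf{x}_v^{(1)}\mapsto\sigma(\mathbf{O}_v^{(2)T}\mathbf{W}^{(2)}\mathbf{O}_v^{(2)}\mathbf{x}_v^{(1)}+\mathbf{b}^{(2)})$. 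Across several parallel channels, for each channel $k$ take $\mathbf{W}^{(2,k)}$ rank-one and let $\phi^{(2)}$ output the unit direction $\mathbf{a}_v^{(k)}:=\mathbf{O}_v^{(2,k)T}\mathbf{e}_1$ as a continuous function of $\mathbf{p}_v$; after ReLU, coordinate extraction, and the linear decoder, channel $k$ contributes a hidden neuron $\sigma(\mathbf{w}_v^{(k)T}\mathbf{x}_v^{(1)}+\tilde{b}^{(k)})$ with weight $\mathbf{w}_v^{(k)}\propto\mathbf{a}_v^{(k)}$ that varies freely across channels and PE values. The classical universal approximation theorem for one-hidden-layer networks then yields uniform $\epsilon$-approximation of $F(\mathbf{p}_v,\cdot)$ on the compact image of $K^N$.

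\textbf{Main obstacle.} The principal difficulty is that the BuNN update is structurally rigid: bundle maps are constrained to be orthogonal, and the per-channel transformation in~\ref{eq:update} is conjugate to a single fixed $\mathbf{W}^{(\ell)}$, so no one channel on its own reproduces a generic linear neuron. The Householder parameterization of $O(d)$ provides a continuous surjection compatible with MLP outputs (handling the orthogonality), and parallel channels sidestep the fixed-spectrum restriction (handling the conjugation); together these show the BuNN is at least as expressive as a standard MLP at the node level. The remaining work is quantitative: propagate the Householder, MLP, heat-kernel-truncation, and small-$t_2$ approximation errors through the encoder/decoder to obtain the $\epsilon$-bound uniformly over the finite family $\mathcal{K}$ and the compact set $K^N$.
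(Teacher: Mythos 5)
Your proposal is correct in outline but takes a genuinely different decomposition from the paper's. The paper proves that a \emph{single} BuNN layer, wrapped in a node-wise lift/pool encoder--decoder, realizes \emph{any linear feature transformation exactly}: it allocates one $2c$-dimensional block per node of the whole finite family ($k=\sum_{\gph\in\mathcal{K}}|\V_\gph|$), lets $\mathbf{O}_u$ swap the two halves of block $u$ only, and stores the target Jacobian blocks $\tfrac{1}{\mathcal{H}(t,u,v)}\,\partial(L\mathbf{X})_u/\partial\mathbf{x}_v$ in a lookup-table weight matrix $\mathbf{W}$ so that after pooling only the $(u,v)$ entry survives; continuity then follows from the classical linear--$\sigma$--linear argument (Theorem~\ref{thm:traditionalunivapprox}). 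You instead make layer $1$ a fixed \emph{gather} operator that routes every $\mathcal{H}(t,v,u)\mathbf{x}_u$ into a distinct block of node $v$'s state, and push all approximation work into layer $2$ as a node-wise MLP. Both constructions rest on the same key mechanism --- injective PEs assign each node a unique block index, and block-permutation bundle maps conjugated around the heat kernel separate source-node contributions into disjoint subspaces, with the known positive constants $\mathcal{H}(t,v,u)$ divided out afterwards --- so the central insight is shared. Your gather step is more width-economical ($\mathcal{O}(c\max_{\gph}n_\gph)$ versus the paper's $\mathcal{O}(c\sum_{\gph}|\V_\gph|)$), at the price of an extra error source from taking $t_2\to 0$ in layer $2$, which the paper's exact-linear construction avoids.

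Two repairs are needed. A minor one: you set $\mathbf{b}^{(1)}=\mathbf{0}$ and then invoke ``a positive bias shift before the ReLU,'' but the model has no bias after diffusion; you must instead take $\mathbf{b}^{(1)}=\beta\mathbf{1}$ in \eqref{eq:update} and observe that it passes through diffusion unchanged, since permutation matrices fix $\mathbf{1}$ and the rows of $\mathcal{H}(t)$ sum to one. The substantive gap is in layer $2$: with $\mathbf{W}^{(2,k)}=\lambda_k\mathbf{e}_1\mathbf{e}_1^T$ the effective weight vector at node $v$ is $\lambda_k\, a^{(k)}_{1,v}\,\mathbf{a}^{(k)}_v$, so its magnitude is coupled to the first coordinate of its direction through the \emph{global} scalar $\lambda_k$, and the biases and readout weights of the simulated one-hidden-layer network are likewise shared across nodes. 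The classical universal approximation theorem does not directly cover a node-indexed family of functions realized with node-dependent hidden directions but shared magnitudes, biases, and output weights, so ``varies freely across channels and PE values'' is not justified as written. The clean fix is to make layer $2$ entirely node-independent: carry $\mathbf{p}_v$ (or equivalently the heat-kernel row, which layer $1$ already encodes) inside the feature vector, set $\mathbf{O}^{(2)}_v\equiv\mathbf{I}$ so that ${\mathbf{O}_v^{(2)}}^T\mathbf{W}^{(2)}\mathbf{O}_v^{(2)}=\mathbf{W}^{(2)}$ is an unconstrained matrix, and approximate the single continuous function $(\mathbf{p}_v,\mathbf{z}_v)\mapsto (H_\gph(\mathbf{X}))_v$ with an ordinary MLP. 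This also shows that the ``main obstacle'' you identify --- conjugation by a single fixed $\mathbf{W}$ --- dissolves once the bundle maps are taken trivial in the layer doing the pointwise work; the rank-one-channel machinery is unnecessary.
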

\vspace{-0.2cm}
% In particular, the proof shows that the width of the construction is of order $\bigO\left( \sum_{\gph\in\mathcal{K}} |\V_\gph|\right)$ in the number of nodes in the family of graphs.

In particular, let $\epsilon>0$ and $H$ be a feature transformation on a finite subset  $\mathcal{K}\subseteq\mathcal{G}$ and $K\subseteq \reals^d$ a compact set, then there is a $2$-layer deep BuNN with width of order $\bigO\left( \sum_{\gph\in\mathcal{K}} |\V_\gph|\right)$ that approximates $H$ over $\bigsqcup_{\gph\in\mathcal{K}}K^{n_\gph}\subseteq \bigsqcup_{\gph\in\mathcal{G}}\reals^{n_\gph d}$. In other words, the required hidden dimension of BuNN is only linearly dependent on the number of nodes in the family of graphs. 
\vspace{-0.2cm}
\section{Experiments}\label{sec:experiments}
\vspace{-0.2cm}
In this section, we evaluate BuNNs through a range of synthetic and real-world experiments. We first validate the theory from Section \ref{sec:modelproperties} with two synthetic tasks. We then evaluate BuNNs on popular real-world benchmarks. We use truncated Taylor approximations for small values of $t$, while for larger $t$, we use the truncated spectral solution. We provide supplementary information on the implementation and precise experimental details in the Appendix (Sections \ref{app:algimpdet} and \ref{app:expdet} respectively).\footnote{All code can be found at \url{https://github.com/jacobbamberger/BuNN}} 
\vspace{-0.15cm}
\subsection{Synthetic experiments: over-squashing and over-smoothing} \label{sec:newsynth}
\vspace{-0.15cm}
% \begin{minipage}{\textwidth}
%   \begin{minipage}[b]{0.49\textwidth}
%     \centering
%     \includegraphics[width=0.9\textwidth]{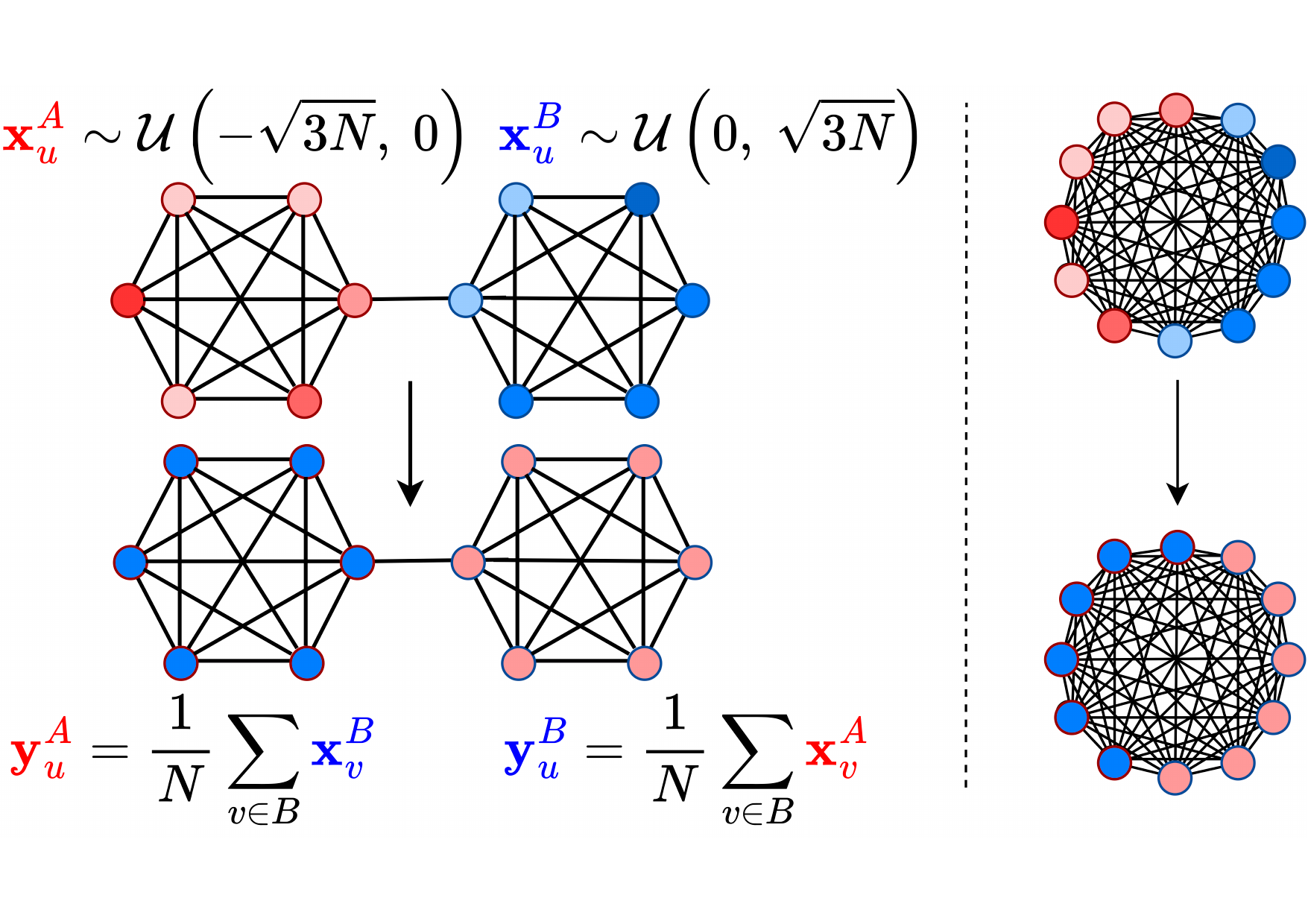}
%     \vspace{-0.3cm}
%     \captionof{figure}{Synthetic over-squashing (left) and over-smoothing (right) tasks on barbell and clique graphs.}\label{fig:synthexp}
%   \end{minipage}
%   \hfill
%   \begin{minipage}[b]{0.49\textwidth}
%     \centering
%      \begin{tabular}{l|c| c}
%         Task & \specialcell{Barbell \\ \textit{(over-squashing)}} & \specialcell{Clique \\ \textit{(over-smoothing)}} \\
%         \hline
%         Base. 1 & $30.97\pm 0.42$ & $30.94\pm 0.42$ \\
%         Base. 2 & $1.00\pm 0.07$ & $0.99\pm 0.08$ \\
%         MLP & $1.08\pm 0.07$ &   $1.10 \pm 0.08$\\
%         GCN & $1.05\pm 0.08$ & $29.65\pm 0.34$ \\
%         SAGE & $0.90\pm 0.29$ & $0.86\pm 0.10$ \\
%         GAT & $1.07 \pm 0.09$ & $20.97 \pm 0.40$ \\
%         NSD & $1.09 \pm 0.15$ & $0.08 \pm 0.02$ \\
%         BuNN & $\mathbf{0.01\pm 0.07}$ & $\mathbf{0.03 \pm 0.01}$ \\
%     \end{tabular}
%       \captionof{table}{Mean squared error (MSE) of different models for the over-squashing and over-smoothing experiments on the barbell and clique graphs, respectively.}
%       \label{tab:oversquash}
%     \end{minipage}
%   \end{minipage}
\xhdr{Tasks} In this experiment, we propose two new node-regression tasks in which nodes must average the input features of a subset of nodes. The input graph contains two types of nodes, whose features are sampled from disjoint distributions. The target for nodes is to output the average input feature over nodes of the other type and vice-versa, as illustrated in Figure~\ref{fig:synthexp}. First, we test the capacity to mitigate \textbf{over-squashing}. We consider the case where the underlying graph is a barbell graph consisting of two fully connected graphs - each being a type and bridged by a single edge. This bottleneck makes it hard to transfer information from one cluster to another. Second, we test the capacity to mitigate \textbf{over-smoothing}. We consider the fully connected graph, in which all nodes are connected. The fully connected graph is a worst-case scenario for over-smoothing since after one step of message passing, the features are fully averaged over the graph and hence over-smoothed. 

\xhdr{Setup} As a first baseline, we consider a constant predictor always predicting $0$, the expected mean over the whole graph. As a second baseline, the cluster-specific constant predictor predicting the expected mean over the opposite cluster, that is, $\frac{\pm \sqrt{3}}{2}$ depending on the cluster.

\begin{wrapfigure}{r}{8cm}
    \vspace{-0.8cm}
    \begin{center}
            \includegraphics[width=0.5\textwidth]{figures/synthetic_exps.pdf}
    \end{center}
    \vspace{-0.8cm}
    \caption{Synthetic over-squashing (left) and over-smoothing (right). In both cases, blue nodes output the average over the red nodes and vice-versa.}\label{fig:synthexp}
\end{wrapfigure}

Additionally, we consider GNN baselines to be a node-level MLP, GCN \citep{kipf_semi-supervised_2017}, GraphSAGE \citep{hamilton_inductive_2018}, GAT \citep{velickovic_graph_2018}, NSD \citep{bodnar_neural_2022}, and a fully connected GraphGPS (\cite{rampavsek2022recipe}). The depth of MPNNs is fixed to the minimal depth to avoid under-reaching, namely $3$ for the barbell and $1$ for the fully connected graph, and ensure the width is large ($>128$) considering the task. The depth of the transformer is tuned between $1$ and $3$ with $2$ heads. We compare these to a BuNN with an MLP learning the bundle maps of a comparable number of parameters and the same depth. We use Adam optimizer with $10^{-3}$ learning rate, batch size $1$, and train for $500$ epochs. We use $100$ samples for both training and testing.

\looseness=-1
\xhdr{Results} The results for $N=10$ are reported in Table~\ref{tab:oversquash}. All MPNNs perform poorly on the \textbf{over-squashing} task. All MPNNs perform comparably to baseline $2$, showing their incapability to transfer information between clusters. This is explained by the fact that nodes from different clusters have high commute time and effective resistance, which tightly connects to over-squashing \citet{di_giovanni_over-squashing_2023, black_understanding_2023, dong2024differentiable}. On the other hand, BuNN achieves almost perfect accuracy on this task, which supports the claim that BuNN mitigates over-squashing. We note, however, that to solve the task perfectly, we need $t\geq 10$, allowing BuNN to operate on a larger scale more adapted to the task. To solve this task, BuNN can assign the orthogonal maps to separate the two types of nodes, making each node listen only to the nodes of the other type, a behavior proved to be possible in Corollary~\ref{cor:squashing}.
\begin{wraptable}{r}{7cm}
    \centering
    \vspace{-0.5cm}
    \caption{\textbf{BuNN mitigates over-smoothing and over-squashing.} Mean squared error (MSE) of different models on the two synthetic tasks.}
    \vspace{0.1cm}
     \begin{tabular}{l|c| c}
         & \specialcell{Barbell \\ \textit{(over-squashing)}} & \specialcell{Clique \\ \textit{(over-smoothing)}} \\
        \hline
        Base. 1 & $30.97\pm 0.42$ & $30.94\pm 0.42$ \\
        Base. 2 & $1.00\pm 0.07$ & $0.99\pm 0.08$ \\
        MLP & $1.08\pm 0.07$ &   $1.10 \pm 0.08$\\
        GCN & $1.05\pm 0.08$ & $29.65\pm 0.34$ \\
        SAGE & $0.90\pm 0.29$ & $0.86\pm 0.10$ \\
        GAT & $1.07 \pm 0.09$ & $20.97 \pm 0.40$ \\
        GPS & $1.06\pm 0.13$  & $1.06\pm 0.13$ \\
        NSD & $1.09 \pm 0.15$ & $0.08 \pm 0.02$ \\
        BuNN & $\mathbf{0.01\pm 0.07}$ & $\mathbf{0.03 \pm 0.01}$ \\
    \end{tabular}
      \label{tab:oversquash}
\end{wraptable}

\vspace{-0.4cm}
\looseness=-1
Similarly, the \textbf{over-smoothing} task on the clique graph is also challenging. Indeed, GCN and GAT perform exceptionally poorly, comparably to Baseline $1$ which only has access to global information. Indeed, to the best of our knowledge, these are the only models with formal proofs of over-smoothing \citep{cai_note_2020, wu_demystifying_2023}. GraphSAGE performs slightly better because it processes neighbors differently than it processes nodes themselves. Interestingly, GPS fails to generalize providing evidence for over-smoothing and over-squashing in graph transformers. Moreover, NSD and BuNN solve the task due to their capability to mitigate over-smoothing. Indeed, BuNN can learn to ignore nodes from a given cluster as proved in Corollary~\ref{cor:squashing}.

\subsection{Real-world tasks}\label{sec:realexperiments}
\looseness=-1
We evaluate BuNNs on the Long Range Graph Benchmark \citep{dwivedi_long_2023} and the heterophilic tasks from \citet{platonov_critical_2023}. We provide the implementation details in Appendix~\ref{app:algimpdet}.
\begin{table}[htbp]
    \vspace{-0.8cm}
    \caption{Results for the heterophilic tasks. Accuracy is reported for \texttt{roman-empire} and \texttt{amazon-ratings}, and ROC AUC is reported for \texttt{minesweeper}, \texttt{tolokers}, and \texttt{questions}. Best results are denoted by \textbf{bold}. Asterisk$^*$ denotes that some runs ran out of memory on an NVIDIA A10 GPU (24 GB).} \label{tab:heterophilous}
    \vspace{0.2cm}
    \centering
    \begin{tabular}{lccccc}
    \hline & roman-empire & amazon-ratings & minesweeper & tolokers & questions\\
    % \hline ResNet & $65.88 \pm 0.38$ & $45.90 \pm 0.52$ & $50.89 \pm 1.39$ & $72.95 \pm 1.06$ & $70.34 \pm 0.76$\\
    % ResNet+SGC & $73.90 \pm 0.51$ & $50.66 \pm 0.48$ & $70.88 \pm 0.90$ & $80.70 \pm 0.97$ & $75.81 \pm 0.96$ \\
    % ResNet+adj & $52.25 \pm 0.40$ & $51.83 \pm 0.57$ & $50.42 \pm 0.83$ & $78.78 \pm 1.11$ & $75.77 \pm 1.24$ \\
    \hline GCN & $73.69 \pm 0.74$ & $48.70 \pm 0.63$ & $89.75 \pm 0.52$ & $83.64 \pm 0.67$ & $76.09 \pm 1.27$\\
    SAGE & $85.74 \pm 0.67$ & \textcolor{black}{$53.63 \pm 0.39$} & $93.51 \pm 0.57$ & $82.43 \pm 0.44$ & $76.44 \pm 0.62$\\
    GAT & $80.87 \pm 0.30$ & $49.09 \pm 0.63$ & $92.01 \pm 0.68$ & $83.70 \pm 0.47$ & $77.43 \pm 1.20$\\
    GAT-sep & $88.75 \pm 0.41$ & $52.70 \pm 0.62$ & $93.91 \pm 0.35$ & $83.78 \pm 0.43$ & $76.79 \pm 0.71$\\
    GT & $86.51 \pm 0.73$ & $51.17 \pm 0.66$ & $91.85 \pm 0.76$ & $83.23 \pm 0.64$ & $77.95 \pm 0.68$\\
    GT-sep & $87.32 \pm 0.39$ & $52.18 \pm 0.80$ & $92.29 \pm 0.47$ & $82.52 \pm 0.92$ & $78.05 \pm 0.93$ \\
    \hline
    NSD & $80.41\pm0.72$ & $42.76\pm0.54$ & $92.15\pm 0.84$ & $78.83\pm0.76 ^*$ & $69.69\pm 1.46^*$\\
    % \hline
    \hline
    BuNN & $\mathbf{91.75 \pm 0.39}$ & $\mathbf{53.74\pm 0.51}$ & $\mathbf{98.99\pm 0.16}$ & $\mathbf{84.78\pm 0.80}$ & $\mathbf{78.75\pm 1.09}$ \\
    \bottomrule
    \end{tabular}
    \vspace{0.3cm}
\end{table}

\xhdr{Heterophilic datasets} As we have shown in Section \ref{sec:modelproperties}, BuNNs are provably capable of avoiding over-smoothing. It is, therefore, natural to test how BuNN performs on heterophilic graphs where over-smoothing is recognized as an important limitation (e.g. \citet{yan_two_2022}). We follow their methodology to evaluate BuNN on the $5$ heterophilic tasks proposed in \citet{platonov_critical_2023}. We run the models with $10$ different seeds and report the mean and standard deviation of the test accuracy for \texttt{roman-empire} and \texttt{amazon-ratings}, and mean and standard deviation test ROC AUC for \texttt{minesweeper}, \texttt{tolokers}, and \texttt{questions}. We use the classical baselines from \citet{platonov_critical_2023} and NSD, and provide the hyper-parameters in the Appendix (Section~\ref{app:hetero}). 

\xhdr{Results} We report the results in Table~\ref{tab:heterophilous}. BuNN achieves the best score on all tasks, with an average relative improvement of $4.4\%$. Its score on \texttt{minesweeper} is particularly impressive,
which is significantly ahead of the rest and for which BuNN solves the task perfectly. We found that the optimal value of $t$ over our grid search varies across datasets, being $1$ for \texttt{amazon-ratings} and $100$ for \texttt{roman-empire}. BuNN consistently outperforms the sheaf-based model NSD by a large margin, which we believe is due to the fact that NSD learns a map for every node-edge pairs making NSD more prone to overfitting, while BuNNs can act as a stronger regularizer. Such strong performance confirms our theory on over-smoothing from Section~\ref{sec:over-smoothing} and showcase the strong modeling capacity of BuNN in heterophilic settings.
\begin{table}[htbp]
    \centering
    \caption{Results for the \texttt{Peptides-struct}, \texttt{Peptides-func}, and \texttt{PascalVOC-SP} tasks from the Long Range Graph Benchmark (results are $\times 100$ for clarity). The best result is \textbf{bold}.}
    \begin{tabular}{l c c c}\toprule
        & \texttt{Peptides-func} & \texttt{Peptides-struct} & \texttt{PascalVOC-SP}\\
        \textbf{Model} & \textbf{Test AP $\uparrow$}& \textbf{Test MAE $\downarrow$} & \textbf{Test F1$\uparrow$}\\\midrule
        % MLP & 506k & 0.4217$\pm$0.0049 & 0.4060$\pm$0.0021 & 0.4273$\pm$0.0011 & 0.4351$\pm$0.0008 \\
        GCN  &$68.60\pm0.50$ & $24.60\pm0.07$ & $20.78\pm0.31$ \\
        GINE  & $66.21\pm0.67$ & $24.73\pm 0.17$ & $27.18\pm 0.54$\\
        GatedGCN & $67.65\pm0.47$ &$24.77\pm0.09$ & $38.80\pm 0.40$  \\
        DReW & $71.50\pm0.44$ & $25.36\pm0.15$ & $33.14\pm 0.24$ \\ \midrule
        SAN & $64.39 \pm 0.75$ & $25.45\pm 0.12$ & $32.30 \pm 0.39$\\
        GPS & $65.34\pm 0.91$ & $25.09\pm 0.14$ & $\mathbf{44.40\pm 0.65}$\\
        GAPH ViT  & $69.42\pm 0.75$ & $\mathbf{24.49}\pm0.16$ & - \\
        Exphormer & $65.27\pm 0.43$ & $24.81\pm 0.07$ & $39.75\pm 0.37$\\
        \midrule
        BuNN & $\mathbf{72.76\pm 0.65}$ & $24.63\pm 0.12$ & $ 40.49\pm 0.46 $\\
        % BuNN-Hop & $\mathbf{71.92}\pm 0.22$& $24.63\pm 0.25$\\
        \bottomrule
    \end{tabular}
    \label{tab:peptides-exps}
    \vspace{-5pt}
\end{table}

\xhdr{Long Range Graph Benchmark} In Section \ref{sec:over-squashing}, we showed that BuNNs have desirable properties when it comes to over-squashing and modeling long-range interactions. To verify such claims empirically, we evaluate BuNN on tasks from the Long Range Graph Benchmark (LRGB) \citep{dwivedi_long_2023}. We consider the \texttt{Peptides} dataset consisting of $15\,535$ graphs which come with two associated graph-level tasks, \texttt{Peptides-func} and \texttt{Peptides-struct}, and the node classification on the \texttt{PascalVOC-SP} dataset with $11\,355$ graphs and $5.4$ million nodes where each graph corresponds to an image in Pascal VOC 2011 and each node to a superpixel in that image.

\looseness=-1
The graph classification task in \texttt{Peptides-func} is to predict the function of the peptide from $10$ classes, while the regression task in \texttt{Peptides-struct} is inferring the 3D properties of the peptides. In both cases, we follow the standard experimental setup detailed by \citet{dwivedi_long_2023} alongside the updated suggestions from \citet{tonshoff_where_2023}. The performance metric is Average Precision (AP) for \texttt{Peptides-func} and Mean Absolute Error (MAE) for \texttt{Peptides-struct}. The node classification task in \texttt{PascalVOC-SP} consists of predicting a semantic segmentation label for each node out of 21 classes, with performance metric being the macro F1 score. We run each experiment on $4$ distinct seeds and report mean and standard deviation over these runs. Baseline models are taken from \citet{tonshoff_where_2023} and include MPNNs, transformer models, and the current SOTA models \citep{gutteridge_drew_2023, he_graph_2022, rampavsek2022recipe}.

\xhdr{Results} We report the results in Table \ref{tab:peptides-exps}. BuNNs achieve, to the best of our knowledge, a new state-of-the-art result on \texttt{Peptides-func}. BuNNs also perform strongly on both other tasks, being second overall on \texttt{PascalVOC-SP} and clearly outperforming all MPNN models, and is third within one standard deviation of the second model on \texttt{Peptides-struct}. The overall strong performance on the LRGB benchmarks confirms our theory on oversquashing from Section~\ref{sec:over-squashing} and provides further evidence of the long-range capabilities of BuNNs.

\vspace{-0.3cm}
\section{Conclusion}
\vspace{-0.3cm}
\looseness=-1
In this work, we proposed Bundle Neural Networks -- a new type of GNN that operates via message diffusion on graphs. We gave a formal analysis of BuNNs showing that message diffusion can mitigate issues such as over-smoothing - since the heat equation over vector bundles admits a richer set of fixed points - and over-squashing - since BuNNs can operate at a larger scale than standard MPNNs. We also prove compact uniform approximation of BuNNs, a first expressivity result of its kind, characterizing their expressive power and establishing their superiority over MPNNs. We then confirmed our theory with carefully designed synthetic experiments. Finally, we showed that BuNNs perform well on heterophilic and long range tasks which are known to be challenging for MPNNs.
% In this work, we proposed Bundle Neural Networks -- a new type of Graph Neural Network that operates via heat equations over vector bundles. We studied its asymptotic behavior and demonstrated that such a construction offers a variety of important properties such as compact uniform approximation and copes with over-smoothing and over-squashing. Finally, we showed that in practice BuNNs perform well on a range of synthetic and real-world benchmarks. 

\section{Acknowledgements}

We would like to thank Francesco di Giovanni for insightful conversations at early stages of this work. We are grateful to Teodora Reu for help on early versions of the figures. This research is partially supported by EPSRC Turing AI World-Leading Research Fellowship No. EP/X040062/1 and EPSRC AI Hub on Mathematical Foundations of Intelligence: An ``Erlangen Programme" for AI No. EP/Y028872/1. X.D. acknowledges support from the Oxford-Man Institute of Quantitative Finance and the EPSRC (EP/T023333/1).

% \bibliography{iclr2025_conference}
\bibliographystyle{iclr2025_conference}

\newpage
\appendix

\section{Limitation and Future work}\label{app:limitations} A limitation of our framework is that while message diffusion allows to operate on different scales of the graph, the computation of the heat kernel for large $t$ requires spectral methods and is therefore computationally expensive. An exciting research direction consists of using existing computational methods to approximate the heat kernel efficiently for large values of $t$. A limitation of our experiments is that we consider inductive graph regression/classification and transductive node regression/classification tasks but no link prediction task. A limitation in our theory is that Theorem~\ref {thm:unifapp} assumes injective positional encodings at the node level, which might only sometimes be available; future work could characterize the expressiveness when these are unavailable.

% \section{Societal Impact}\label{app:societalimpact} This paper presents work whose goal is to advance the field of Machine Learning. Our work proposes a novel framework and model that can potentially be used for positive and negative societal impacts. However, due to the technical nature of our work, we are not aware of any particular potential positive or negative societal impact that should be highlighted here.

\section{Proofs}\label{app:sec4}

In this section, we provide proof of the theoretical results from the main text. Namely Lemma~\ref{lem:newheatdiff}, Proposition~\ref{prop:signal-energy}, Lemma~\ref{lem:jacobian}, Corollary~\ref{cor:squashing}, Proposition~\ref{prop:mpnn-uniform-approx}, and finally Theorem~\ref{thm:unifapp}.

% Both Lemmas are used in the main Theorem. In order to prove the main Lemma~\ref{lem:limheatdiff} we require two preliminary Lemmas. In Lemma~\ref{lem:laplaciansmoothing} we prove that the powers of the standard random walk normalized adjacency converge to an average-type dense matrix. We then prove a similar result for the bundle random walk normalized adjacency in Lemma~\ref{lem:bundlesmoothing}, where in addition to being a fully connected average the limit also incorporates the bundle maps $\mathbf{O}_v$. From there we use spectral methods to prove that powers of the bundle random-walk normalized adjacency behave very similarly to the bundle heat kernel for large values of $t$, and that their limits are equal.

\begin{replemma}{lem:newheatdiff} For every node $v$, the solution at time $t$ of heat diffusion on a connected bundle $\gph = \left(\V,\ \E,\ \mathbf{O} \right)$ with input node features $\mathbf{X}$ satisfies: 
\begin{equation}
    \left(\mathcal{H}_\mathcal{B}(t)\mathbf{X}\right)_{v} = \sum_{u\in\V}\mathcal{H}(t, v, u)\mathbf{O}_{v}^T\mathbf{O}_{u}\mathbf{x}_u,
\end{equation} where $\mathcal{H}(t)$ is the standard graph heat kernel, and $\mathcal{H}(t,\ v, \ u)\in \reals$ its the entry at $(v, \ u)$.
\end{replemma}

\begin{proof} Since $\mathbf{\mathcal{L}}_\mathcal{B} = \mathbf{O}_\mathcal{B}^T\mathbf{\mathcal{L}}\mathbf{O}_\mathcal{B}$ we get $\mathcal{H}_\mathcal{B}(t, u, v) = \mathbf{O}_\mathcal{B}^T \mathcal{H}(t, u, v) \mathbf{O}_\mathcal{B}$ by the definition of the heat kernel.
\end{proof}

\subsection{Over-smoothing: proofs of Section~\ref{sec:over-smoothing}.}\label{app:oversmoothing}
In this section, we prove the results on the stable states and on over-smoothing. The first result follows straightforwardly from Lemma~\ref{lem:newheatdiff}.

\begin{repproposition}{prop:signal-energy}
    Let $\mathbf{Y}$ be the output of a BuNN layer with $t=\infty$, where $G$ is a connected graph, and the bundle maps are not all equal. Then, $u, v \in \V$ is connected such that $\mathbf{y}_v \neq \mathbf{y}_u$ almost always.
\end{repproposition}
\begin{proof}Consider a stable signal $\mathbf{Y} \in \ker \mathcal{L}_\mathcal{B}$ and pick $u, v \in \V$ such that $\resmap{u} \neq \resmap{v}$. As $\mathbf{Y}$ is stable, it must have $0$ bundle Dirichlet energy, so we must have that $\resmap{u}\mathbf{y}_u = \resmap{v}\mathbf{y}_v$, but as $\resmap{u} \neq \resmap{v}$ we have that $\mathbf{y}_u \neq \mathbf{y}_v$, which holds except in degenerate cases such as when the matrices $\mathbf{O}_u$ are reducible, or when the original signal is the zero vector. 
\end{proof}

The idea of the second result is that due to the injectivity of the positional encodings on the two nodes, we can set their restriction maps so that the first layer zeroes out the first channel of one node and the second channel of the second. The next $K-1$ layers simply keep this signal fixed by realizing them as a fixed point of the bundle heat equation.

\begin{repproposition}{prop:oversmoothing}
    Let $\mathbf{X}$ be $2$-dimensional features on a connected $\gph$, and assume that there are two nodes $u,\ v$ with $\mathbf{p}_u\neq \mathbf{p}_v$. Then there is an $\delta>0$ such that for every $K$, there exist a $K$-layer deep BuNN with $2$ dimensional stalks, $\phi^{(\ell)}$ is a node-level MLP, ReLU activation, and $\|\mathbf{W}^{(\ell)}\|\leq 1$, with output $\mathbf{Y}$ such that $\|\mathbf{y}_v - \mathbf{y}_u\|>\delta$.
\end{repproposition}

\begin{proof}
    We will show the result for the limit $t\rightarrow \infty$. Let $\mathbf{O}_w=\mathbf{Id}$ for $w\neq u$ and $\mathbf{O}_v = \begin{pmatrix}
        0& 1\\
        1& 0
    \end{pmatrix}$. These restriction maps can be learned by assumption since $\mathbf{p}_u\neq \mathbf{p}_v$ and MLPs are universal. Define 
    \[\mathbf{h} = \sum_{w\neq u}\frac{d_w}{2|\E|} \mathbf{O}_w \mathbf{x}_w + \frac{d_u}{2|\E|} \begin{pmatrix}
        0 & 1 \\
        1 & 0
    \end{pmatrix}\mathbf{x}_u \in \mathbb{R}^2\]
    and denote $h_0$ the entry in the first dimension and $h_1$ its second entry.

    \paragraph{Case 1: $(h_0)^2>0 $ or $ (h_1)^2>0$.}
    We start with the case $(h_0)^2>0$. Set $\delta = (h_0)^2$. If $h_0>0$, set the first weight matrix to be $\mathbf{W}^{(1)}=\begin{pmatrix}
        1 & 0\\
        0 & 0
    \end{pmatrix}$, otherwise let $\mathbf{W}^{(1)}=\begin{pmatrix}
        -1 & 0\\
        0 & 0
    \end{pmatrix}$. Set the first bias to $\mathbf{0}$. The output before the activation will then be $\mathbf{h}^{(1)} = (|h_0|, \ 0)^T$ for all $w\neq u$ and $\mathbf{h}^{(1)} = (0, \ |h_0|)^T$ by Equation~\ref{eq:consensus}. Since $\sigma$ is ReLU, it does not change the output. For the next layers, picking the same restriction maps $\mathbf{O}_w$s and $\mathbf{W}^{(\ell)}=\mathbf{Id}$ also do not change the output. Hence at layer $K$, the output at $v$ is $\mathbf{y}_v = (|h_0|, \ 0)^T$ and at $u$ it is $\mathbf{y}_u = (0\ , |h_0|)^T$. We conclude since $\|\mathbf{y}_u - \mathbf{y}_u\|_2^2 = 2|h_0|^2 \geq \delta$. If $(h_0)^2=0$ and $(h_1)^2=0$ the argument is the same.

    \paragraph{Case 2: $(h_0)^2=0 $ and $ (h_1)^2=0$.}
    Set $\delta = \frac{1}{2}$ and $\mathbf{W}^{(1)}=\mathbf{0}$ and $\mathbf{b}^{(1)}=(1, \ 0)^T$. Before the message diffusion step, the signal is constant on the nodes and equal to $\mathbf{b}$. By Equation~\ref{eq:consensus}, the pre-activation output of the first layer is $\mathbf{b}=(1,\ 0)$ at all nodes $w\neq u$ and $\mathbf{O}_u^T\mathbf{b} = (0,\ 1)^T$ at $u$. Since all entries are positive, the ReLU activation leaves them fixed. The subsequent layer can be set as in the previous case to keep the node embedding fixed. Consequently, $\|\mathbf{y}_u-\mathbf{y}_v\|=1>\frac{1}{2}$.
\end{proof}

\subsection{Over-squashing: proofs of Section~\ref{sec:over-squashing}.}\label{app:oversquashing}
In this section we prove our results on over-squashing and long-range interactions. The Jacobian result follows straightforwardly from Lemma~\ref{lem:newheatdiff} and the definition of a BuNN layer, and the Corollary follows from the Jacobian result.

\begin{replemma}{lem:jacobian}
    Let BuNN be a linear layer defined by Equations~\ref{eq:bundlemaps}, \ref{eq:update} \&~\ref{eq:diffuse} with hyperparameter $t$. Then, for any connected graph and nodes $u,\ v$, we have
    \begin{equation*}
        \frac{\partial\left( \operatorname{BuNN}\left(\mathbf{X}\right)\right)_u}{\partial \mathbf{x}_v}  = \mathcal{H}(t, u, v) \mathbf{O}_{u}^T \mathbf{W} \mathbf{O}_{v}, 
    \end{equation*}
    and therefore
    \begin{equation*}
    \lim_{t\rightarrow \infty}\frac{\partial\left( \operatorname{BuNN}\left(\mathbf{X}\right)\right)_u}{\partial \mathbf{x}_v} = \frac{d_v}{2|\E|} \mathbf{O}_{u}^T \mathbf{W} \mathbf{O}_{v}.
    \end{equation*}
\end{replemma}

\begin{proof}
The result follows from the closed-form solution of the heat kernel from Lemma \ref{lem:newheatdiff}. We start by applying the bundle encoder from Equation \ref{eq:update} that updates each node representation as $\mathbf{h}_v = \mathbf{O}_v^T \mathbf{W} \mathbf{O}_v\mathbf{x}_v + \mathbf{b}$. Since the $\mathbf{O}_u$ do not depend on the signal $\mathbf{X}$, we get 

\begin{align}
    \frac{\partial\left( \operatorname{BuNN}\left(\mathbf{X}\right)\right)_u}{\partial \mathbf{x}_v} &= \frac{\partial}{\partial \mathbf{x}_v}\left[ \sum_{v \in \V} \mathcal{H}(t, u, v)\resmap{u}^T\resmap{v}\left(\resmap{v}^T \mathbf{W} \resmap{v}\mathbf{x}_v + \mathbf{b} \right) \right]\\
    &= \mathcal{H}(t, u, v) \resmap{u}^T\mathbf{W}\resmap{v}.
\end{align}
The second statement follows from the fact that $\mathcal{H}(t, u, v)\rightarrow \frac{d_u}{2|E|}$
\end{proof}

To illustrate the flexibility of such a result, we examine a setting in which we want nodes to select which nodes they receive information from, therefore `reducing' their receptive field.

\begin{repcorollary}{cor:squashing}
    Consider n nodes $u,\ v, \ \text{and} \  w_i, \text{ for} \ i=1, \ldots n-2$, of a connected graph with $2$ dimensional bundle such that $\mathbf{p}_v\neq \mathbf{p}_{w_i}$ and $\mathbf{p}_u\neq \mathbf{p}_{w_i} \ \forall i$. Then in a BuNN layer with MLP $\phi$, at a given channel, the node $v$ can learn to ignore the information from all $w_is$ while keeping information from $u$.
\end{repcorollary}
\begin{proof}
    We denote $\mathbf{y}$ the output of the layer, and index the two dimensions by super-scripts, i.e. $\mathbf{y} = \begin{pmatrix}
        \mathbf{y}^{(1)} \\
        \mathbf{y}^{(2)}
    \end{pmatrix}$. Our goal is to have $\frac{\partial \mathbf{y}_v^{(1)}}{\partial \mathbf{y}_u}\neq \begin{pmatrix}
        0 & 0
    \end{pmatrix}$, while $\frac{\partial \mathbf{y}_v^{(1)}}{\partial \mathbf{y}_{w_i}} =  \begin{pmatrix}
        0 & 0
    \end{pmatrix} $ for all $i$. This would make the first channel of the output at $v$ insensitive to the input at all $w_i$s while being sensitive to the input at node $u$.
    
    Fix $\mathbf{O}_v = \mathbf{O}_u = \begin{pmatrix}
        0 & 1 \\
        1 & 0 
    \end{pmatrix}$ and $\mathbf{O}_{w_i} = \begin{pmatrix}
        1 & 0\\
        0 & 1
    \end{pmatrix}$. Such maps can always be learned by an MLP, by the assumptions on $\mathbf{p}_v$, $\mathbf{p}_u$, and $\mathbf{p}_{w_i}$ and by the universality of MLPs. Let the weight matrix be $\mathbf{W}=\begin{pmatrix}
        w_{11} & w_{12}\\
        w_{21} & w_{22}
    \end{pmatrix}$. By Lemma~\ref{lem:jacobian} we get $\frac{\partial \mathbf{y}_v}{\partial \mathbf{x}_u} = \mathcal{H}(t, v, u) \mathbf{O}_v^T \mathbf{W}\mathbf{O}_{u} = \mathcal{H}(t, v, u)  \begin{pmatrix}
        w_{22} & w_{12}\\
        w_{21} & w_{11}
    \end{pmatrix}$  and $\frac{\partial \mathbf{y}_v}{\partial \mathbf{x}_{w_i}} = \mathcal{H}(t, v, w_i) \mathbf{O}_v^T \mathbf{W}\mathbf{O}_{w_i} = \mathcal{H}(t, v, w_i) \begin{pmatrix}
        w_{21} & w_{22}\\
        w_{11} & w_{12}
    \end{pmatrix}$. Setting $w_{21}$ and $ w_{22}$ to $0$ gives $\frac{\partial \mathbf{y}^{(1)}_v}{\partial \mathbf{x}_{w_i}} = \begin{pmatrix}
        0 & 0
    \end{pmatrix}$ and $\frac{\partial \mathbf{y}^{(1)}_v}{\partial \mathbf{x}_{u}} = \mathcal{H}(t, v, u) \begin{pmatrix}
        0 & w_{12}
    \end{pmatrix} \neq \mathbf{0}$, as desired. 
\end{proof}

\subsection{Expressivity of BuNNs: proofs of Section~\ref{sec:expressiveness}.}\label{app:sec5}
We now turn to BuNN's expressivity. Before proving that BuNNs have compact uniform approximation, we prove that MPNNs fail to have this property. This proves BuNNs' superiority and shows that uniform expressivity is a good theoretical framework for comparing GNN architectures.
\begin{repproposition}{prop:mpnn-uniform-approx}
    There exists a family $\mathcal{G}$ consisting of connected graphs such that bounded-depth MPNNs are not compact uniform approximators, even if enriched with unique positional encoding.
\end{repproposition}
\begin{proof} Let $\mathcal{G}$ be any family of connected graphs with an unbounded diameter (for example, the $n\times n$ grids with $n\rightarrow \infty$). Let the depth of the MPNN be $L$. Let $\gph\in\mathcal{G}$ be a graph with diameter $>L$, and let $u$ and $v$ be two nodes in $\V_\gph$ at distance $>L$. Note that the output at $v$ will be insensitive to the input at $u$, and therefore, the MPNN cannot capture feature transformations where the output at $v$ depends on the input at $u$. This argument holds even when nodes are given unique positional encodings.
\end{proof}

We now turn to our main theoretical contribution. The proof of Theorem~\ref{thm:unifapp} is split into two parts. The first proves that $1$-layer BuNNs have compact uniform approximation over \emph{linear feature transformations}. The second part is extending to continuous feature transformation, which is an application of classical results.

We start by recalling the definition of a linear feature transformation over a family of graphs $\mathcal{G}$:
\begin{definition}
    A linear feature transformation $L\in\mathcal{C}(\mathcal{G}, \reals^c, \reals^{c'})$ over a family of graphs $\mathcal{G}$ is an assignment of each graph $\gph\in\mathcal{G}$ to a linear map $L_\gph:\reals^{n_\gph c}\rightarrow \reals^{n_\gph c'}$. Here, linearity means that for any two node-signals $\mathbf{X}_1\in \reals^{nc}$ and $\mathbf{X}_2\in \reals^{nc'}$, and any real number $\alpha\in\reals$, it holds that $L_\gph\left(\alpha \mathbf{X}_1\right) = \alpha L_\gph\left(\mathbf{X}_1\right)$, and $L_\gph\left(\mathbf{X}_1 + \mathbf{X}_2\right) = L_\gph\left(\mathbf{X}_1\right)+L_\gph\left(\mathbf{X}_2\right)$.
\end{definition}

We will need the following Theorem, which adapts classical results on the universality of MLPs.

\begin{theorem}\label{thm:traditionalunivapprox} If a class of neural networks has compact uniform approximation over $\mathcal{G}$ with respect to linear functions and contains non-polynomial activations, then it has compact universal approximation over $\mathcal{G}$ with respect to continuous functions. 
\end{theorem}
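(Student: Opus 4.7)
The plan is a graph-equivariant analogue of the classical Cybenko--Leshno universal approximation theorem. Fix a continuous target feature transformation $H \in \mathcal{C}(\mathcal{G}, \reals^c, \reals^{c'})$, a finite subset $\mathcal{K} \subseteq \mathcal{G}$, a compact $K \subseteq \reals^c$, and $\epsilon > 0$. The aggregated domain $D := \bigsqcup_{\gph \in \mathcal{K}} K^{n_\gph}$ embeds into a finite-dimensional Euclidean space and is compact, and $H|_D$ is continuous on $D$. The strategy is to first approximate $H|_D$ by an idealised layered network using \emph{exact} linear feature transformations interleaved with entry-wise applications of the non-polynomial activation $\sigma$, and then to replace each linear layer with a compact-uniform approximator supplied by the hypothesis.

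The first step is to invoke a graph-equivariant extension of the Leshno et al.\ theorem: continuous permutation-equivariant feature transformations on a compact domain are approximable in sup-norm by finite compositions of the form $\Phi = L^{(M)} \circ \sigma \circ L^{(M-1)} \circ \sigma \circ \cdots \circ \sigma \circ L^{(1)}$, where each $L^{(i)}$ is a linear feature transformation over $\mathcal{G}$. Because $\mathcal{K}$ is finite, one can derive this by applying the classical non-polynomial-activation UAT graph-by-graph on each compact slice $K^{n_\gph}$ and assembling the per-graph weight matrices into a single family of (block-diagonal, hence equivariant) linear feature transformations indexed by $\gph\in\mathcal{K}$; extending these to all of $\mathcal{G}$ arbitrarily preserves linearity. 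Fix such a $\Phi$ with $\sup_{\gph\in\mathcal{K},\mathbf{X}\in K^{n_\gph}}\|\Phi_\gph(\mathbf{X}) - H_\gph(\mathbf{X})\|_\infty \leq \epsilon/2$.

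The second step is to replace each $L^{(i)}$ by an element $\tilde{L}^{(i)}$ of the class. By hypothesis, the class has compact uniform approximation for linear feature transformations, so for any compact input set and any desired tolerance we can find such a $\tilde L^{(i)}$. Inductively, the input of $L^{(1)}$ lies in the compact set $D$; each composition $\sigma \circ L^{(i)}$ sends compact sets to compact sets, so the relevant input domain at layer $i$ is compact. Because $\sigma$ is continuous and hence uniformly continuous on bounded sets, a small perturbation of each $L^{(i)}$ yields a controlled perturbation of the composed output. A standard layer-wise propagation-of-error argument (choosing per-layer tolerances recursively, working backwards from the output) shows that we can make the total deviation between $\tilde\Phi$ and $\Phi$ at most $\epsilon/2$, so $\|\tilde{\Phi}_\gph(\mathbf{X}) - H_\gph(\mathbf{X})\|_\infty \leq \epsilon$ uniformly on $\mathcal{K}$ and $K$.

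The main obstacle is the first step, namely the equivariant universal approximation statement on the joint compact domain $D$. Classical UAT applies to plain MLPs on a single Euclidean space; lifting it to the equivariant/feature-transformation setting requires either an explicit symmetrisation argument or, as sketched above, using finiteness of $\mathcal{K}$ to reduce to the non-equivariant setting graph-by-graph while verifying that the resulting per-graph weights can be realised as a single collection of linear feature transformations over $\mathcal{G}$. A secondary technical point is that the class must be closed under stacking the layer-wise substitutions used in step two; this is immediate for the architectures of interest here but should be stated explicitly.
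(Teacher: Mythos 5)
Your proposal is correct and follows essentially the same route as the paper: reduce to the classical non-polynomial-activation universal approximation theorem on the finite-dimensional space obtained from the finite subfamily $\mathcal{K}$ and compact $K$, then realize the linear layers within the given class. You are in fact more careful than the paper's own three-line argument, which tacitly treats the hypothesis as \emph{exact} implementability of the linear and affine maps; your layer-wise error-propagation step, using uniform continuity of $\sigma$ on compact sets, is what actually justifies substituting the merely approximate linear layers supplied by the compact uniform approximation hypothesis.
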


\begin{proof}
    Classical theorems such as Theorem~1 in \citep{cybenko_approximation_1989} allow us to approximate any continuous function over a compact set in a finite dimensional vector space by composing a linear map $\mathbf{C}$, an activation $\sigma$, and an affine map $\mathbf{A} \cdot + \mathbf{b}$. Given a finite family of graph $\mathcal{G}$, the space of node features on all graphs is a finite dimensional vector space. By assumption, we can implement the linear map, the activation, and the affine map. Hence, by composing them, we can approximate any continuous function over the compact set.
\end{proof}

We are now ready to prove the paper's main result: that, given injective positional encodings, BuNNs are compact universal approximators of feature transformations. 

\begin{reptheorem}{thm:unifapp}
    Let $\mathcal{G}$ be a set of connected graphs with injective positional encodings. Then $2$-layer deep BuNNs with encoder/decoder at each layer and $\phi$ being a $2$-layer MLP have compact uniform approximation over $\mathcal{G}$.
    
    In particular, let $\epsilon>0$ and $h$ be a feature transformation supported on  $\bigsqcup_{\gph\in\mathcal{K}}K^{n_\gph}\subseteq \bigsqcup_{\gph\in\mathcal{G}}\reals^{n_\gph d}$ with $\mathcal{K}\subseteq\mathcal{G}$ finite and $K\subseteq \reals^d$ a compact set, then there is a $2$-layer deep BuNN with width $\mathcal{O}\left( \sum_{\gph\in\mathcal{K}} |\V_\gph|\right)$ that approximates $h$ with uniform error $< \epsilon$.
\end{reptheorem}
\begin{proof}
\textbf{Reducing to linear approximation.} It suffices to show that a BuNN layer can approximate any linear feature transformation $L$ because we can apply classical results such as Theorem~\ref{thm:traditionalunivapprox} to get universal approximation of $2$-layer deep networks with activation. Following Definition~\ref{def:compunif}, we aim to show that we can approximate a linear feature transformation $L$ on any compact subset. For this, we fix $\epsilon>0$, the finite subset $\mathcal{K}\subseteq\mathcal{G}$, and compact feature space $K\subseteq \reals^c$. In fact, we assume that $K=\reals^c$ since approximating a linear map on any compact feature space is equivalent to approximating it on the whole space because a linear map defined on a neighborhood of the $0$ vector can be extended uniquely to the whole vector space. Our goal is therefore to find a parameterization of a single BuNN layer such that for any graph $\gph\in\mathcal{K}$ and for any input feature $\mathbf{X}\in\reals^{n_\gph c}$, we have $\|L_\gph\left(X\right) - \operatorname{BuNN}_\gph\left(\mathbf{X}\right)\|_\infty < \epsilon$. We will show that $L$ can be parameterized exactly. Since $L$ is linear, it suffices to find a linear BuNN layer that satisfies for any $\gph\in\mathcal{K}$ and any $\mathbf{X}\in \reals^{n_\gph c}$, $\frac{\partial \left(L\left(\mathbf{X}\right)\right)_u}{\partial \mathbf{x}_v} = \frac{\partial \left(\operatorname{BuNN} \mathbf{X}\right)_u}{\partial \mathbf{x}_v}$. By Lemma~\ref{lem:jacobian}, we have $\frac{\partial \operatorname{BuNN}(\mathbf{X})_u}{\partial \mathbf{x}_v} = \mathcal{H}(t, u, v)\mathbf{O}_{u} \mathbf{W} \mathbf{O}_{v}^T$. Hence, since MLPs are universal and the positional encodings are injective, it suffices to find bundle maps $\mathbf{O}:\bigsqcup_{\gph\in\mathcal{K}} \V_{\gph}\rightarrow O\left(k\right)$ and $\mathbf{W}$ such that $\frac{1}{n_\gph d_u}\sum_{v\in\V}\mathbf{O}_{u}^T \mathbf{W} \mathbf{O}_{v}=\frac{\partial \left(LX\right)_u}{\partial X_v}$ for every $u, v\in \gph$ and every $\gph \in \mathcal{K}$.
    
\textbf{Defining the encoder and decoder:} In order to find such a BuNN, we first need a linear encoder $\operatorname{lift}:\reals^c\rightarrow\reals^{2ck}$ which will be applied at every node before applying a $2ck$ dimensional BuNN layer. The lifting transformation maps each node vector $\mathbf{X}_u$ to the concatenation of $k$ vectors $\mathbf{X}_u$ interleaved with $k$ vectors $\mathbf{0}\in\reals^c$. This is equivalent to the linear transformation given by left multiplication by $\left(\mathbf{I}_{c\times c}, \mathbf{0}, \ldots, \mathbf{I}_{c\times c}, \mathbf{0} \right)^T\in \reals^{2ck \times c}$. After the $2ck$ dimensional BuNN network, we will also need a linear decoder $\operatorname{pool}:\reals^{2ck}\rightarrow \reals^{c}$ applied to every node individually, which is the sum of the $k$ different $c$-dimensional vectors that are at even indices. This is equivalent to left multiplication by the matrix $\left(\mathbf{I}_{c\times c}, \mathbf{0}_{c\times c}, \ldots, \mathbf{I}_{c\times c}, \mathbf{0}_{c\times c} \right)\in \reals^{c\times 2ck}$. These two can be seen as a linear encoder and linear decoder, often used in practical GNN implementations. We prove the result by adding the lifting and pooling layers and using the higher dimensional $\widehat{BuNN}$ layer, i.e. we prove that $\operatorname{BuNN} = \operatorname{pool}\circ \widehat{\operatorname{BuNN}}\circ \operatorname{lift}$ can approximate any linear maps which satisfy the encoder and decoder assumption of the Theorem statement.

\textbf{Defining the `universal bundle':} We fix $k=\sum_{\gph\in\mathcal{K}}\left|\V_\gph\right|$, so we can interpret our embedding space as a lookup table where each index corresponds to a node $v\in\bigsqcup_{\gph\in\mathcal{K}} V_\gph$. In turn, we can think of the parameter matrix $\mathbf{W}\in \reals^{\left(\sum_{G\in\mathcal{K}}\left|\V_\gph\right|\right)\times \left(\sum_{G\in\mathcal{K}}\left|\V_\gph\right|\right) }$ as a lookup table where each entry corresponds to a pair of nodes in our dataset $\mathcal{K}$. Still thinking of the indices of the $2ck$ dimensions as $2c$-dimensional vectors indexed by the $k$ nodes in our dataset, we define $\mathbf{O}_u \in O\left(2ck\right)$ as a block diagonal matrix with $k$ different $2c$-dimensional blocks, where the $k_i$th block is denoted $\mathbf{O}_u^{k_i}$. These are all set to the identity except for the block at the index corresponding to node $u$, which is defined as
    $\begin{pmatrix}
        \mathbf{\mathbf{0}_{c\times c}} & \mathbf{I}_{c\times c}\\
        \mathbf{I}_{c\times c} & \mathbf{\mathbf{0}_{c\times c}}
    \end{pmatrix}$
    which is a $2c\times 2c$ matrix that acts by permuting the first $c$ dimensions with the second $c$ dimensions.
    \item \textbf{Computing the partial derivatives.} Since our model $\operatorname{BuNN}$ is a composition of linear maps, and since the maps $pool$ and $lift$ are applied node-wise, we get 
    \begin{align*}\begin{split}
    &\frac{\partial \left(\operatorname{BuNN} \left(\mathbf{X}\right)\right)_u}{\partial \mathbf{x}_v}\\ &= \operatorname{pool}  \frac{\partial \left(\widehat{\operatorname{BuNN}}\left( \operatorname{lift}\left(\mathbf{X}\right)\right)\right)_u}{\partial \operatorname{lift}\left(\mathbf{X}_v\right)}  \operatorname{lift} \\
    &= \left(\mathbf{I}_{c\times c}, \mathbf{0}_{c\times c}, \ldots, \mathbf{I}_{c\times c}, \mathbf{0}_{c\times c} \right)  \mathcal{H}(t, u, v)\mathbf{O}_u^T \mathbf{W}\mathbf{O}_v  \left(\mathbf{I}_{c\times c}, \mathbf{0}_{c\times c}, \ldots, \mathbf{I}_{c\times c}, \mathbf{0}_{c\times c} \right)^T\\
    &=\mathcal{H}(t, u, v) \sum_{1\leq k_1,\ k_2 \leq k} \left(\mathbf{I}_{c\times c}, \mathbf{0}_{c\times c}\right){\mathbf{O}_u^{k_1}}^T \mathbf{W}^{k_1 k_2}\mathbf{O}^{k_2}_v  \left(\mathbf{I}_{c\times c}, \mathbf{0}_{c\times c}\right)^T
    \end{split}
    \end{align*}

We proceed by partitioning the indexing by $(k_1,\ k_2)$ into four cases. The first case is $C_1 = \left\{(k_1, k_2) \text{ such that } \left(k_1 \neq u, v \text{ and } k_2 \neq  u, v \right)\right\}$ for which both $\mathbf{O}^{k_1}_u$ and $\mathbf{O}^{k_2}_v$ act like the identity. The second case is $C_2 = \left\{ (k_1, k_2)\text{ such that } k_1 = u \text{ and } k_2 \neq v\right\}$ where $\mathbf{O}^{k_1}_u$ flips the first $c$ rows with the second $c$ rows and $\mathbf{O}^{k_2}_v$ acts like the identity. $C_3=\left\{(k_1, k_2) \text{ such that } k_2=v \text{ and } k_1\neq u \right\}$ where $\mathbf{O}^{k_2}_v$ flips the first $c$ columns with the second $c$ columns, and $\mathbf{O}^{k_1}_u$ acts like the identity on the rows. Finally, the last case is when $k_1=u$ and $k_2=v$ in which $\mathbf{O}^{k_1}_u$ flips the rows, and $\mathbf{O}^{k_1}_v$ flips the columns. Note that by the injectivity assumption as defined in Definition~\ref{def:posenc}, each node has a unique positional encoding and therefore there such matrices can be parameterized for all nodes simultaneously.

\begin{align*}\begin{split}
    \ldots&= \mathcal{H}(t, u, v) \sum_{1\leq k_1,\ k_2 \leq k} \left(\mathbf{I}_{c\times c}, \mathbf{0}_{c\times c}\right){\mathbf{O}^{k_1}_u}^T \mathbf{W}^{k_1 k_2}\mathbf{O}^{k_2}_v  \left(\mathbf{I}_{c\times c}, \mathbf{0}_{c\times c}\right)^T\\
    &= \mathcal{H}(t, u, v) \left(\mathbf{I}_{c\times c}, \mathbf{0}_{c\times c}\right)\left[ \sum_{(k_1, k_2)\in C_1} \begin{pmatrix}
        \mathbf{W}_{00}^{k_1k_2} & \mathbf{W}_{01}^{k_1k_2} \\
        \mathbf{W}_{10}^{k_1k_2} & \mathbf{W}_{11}^{k_1k_2}\\
    \end{pmatrix}  + \sum_{(k_1, k_2)\in C_2} \begin{pmatrix}
        \mathbf{W}_{10}^{k_1k_2} & \mathbf{W}_{11}^{k_1k_2} \\
        \mathbf{W}_{00}^{k_1k_2} & \mathbf{W}_{01}^{k_1k_2}\\
    \end{pmatrix} \right. \\
    & \ \ \ \ \ \ \ \ \ \ \ \ \ \ \ \ \ \ \ \ \ \ \ \ \ \ \ \ \ \ \ \
     \left. + \sum_{(k_1, k_2)\in C_3} \begin{pmatrix}
        \mathbf{W}_{01}^{k_1k_2} & \mathbf{W}_{00}^{k_1k_2} \\
        \mathbf{W}_{11}^{k_1k_2} & \mathbf{W}_{10}^{k_1k_2}\\
    \end{pmatrix} + \begin{pmatrix}
        \mathbf{W}_{11}^{uv} & \mathbf{W}_{10}^{uv} \\
        \mathbf{W}_{01}^{uv} & \mathbf{W}_{00}^{uv}\\
    \end{pmatrix} \right] \left(\mathbf{I}_{c\times c}, \mathbf{0}_{c\times c}\right)^T\\
    &= \mathcal{H}(t, u, v) \left[ \sum_{(k_1, k_2)\in C_1} 
        \mathbf{W}_{00}^{k_1k_2} + \sum_{(k_1, k_2)\in C_2}
        \mathbf{W}_{10}^{k_1k_2} + \sum_{(k_1, k_2)\in C_3}
        \mathbf{W}_{01}^{k_1k_2} +
        \mathbf{W}_{11}^{uv} \right]
    \end{split}
    \end{align*}

    Where the last line is obtained by applying $\left(\mathbf{I}_{c\times c}, \mathbf{0}_{c\times c}\right)$ on the left and $\left(\mathbf{I}_{c\times c}, \mathbf{0}_{c\times c}\right)^T$ on the right, an operation that selects the upper left $c\times c$ block. We observe that setting all $\mathbf{W}_{00}^{k_1k_2}=\mathbf{W}_{01}^{k_1k_2}=\mathbf{W}_{10}^{k_1k_2}$ to $\mathbf{0}_{c\times c}$ and setting $\mathbf{W}_{11}^{uv} := \frac{1}{\mathcal{H}(t, u, v)}\frac{\partial{\left(L\mathbf{X}\right)_{u}}}{\partial \mathbf{x}_{v}}$ if the nodes corresponding to $u$ and $v$ lie in the same graph and $\mathbf{0}_{c\times c}$ otherwise. This allows us to conclude that any linear layer can be parameterized, completing the proof of the theorem.
\end{proof}

\section{Discussion on compact uniform approximation versus uniform approximation} \label{app:univapprox}

A strong definition of expressivity that deals with infinite collections of graphs was proposed in \citet{rosenbluth_might_2023}. This definition subsumes graph-isomorphism testing (where the input feature on graphs is constant). Furthermore, it also deals with infinite families of graphs, as opposed to most mainstream theorems of GNN expressivity, which are proved for graphs of bounded size (e.g. \citet{azizian_expressive_2020, geerts_expressiveness_2022}). See Section~\ref{sec:background} for the notation and definition of features transformations.

\begin{definition}[From \citet{rosenbluth_might_2023}]
    Let $c,\ c' \in \naturals$ and take $\reals$ as feature space. Consider a collection of graphs $\mathcal{G}$. Let $\Omega\subseteq \mathcal{C}\left(\mathcal{G}, \reals^c, \reals^{c'}\right)$ be a set of feature transformations over $\mathcal{G}$, and let $H\in \mathcal{C}\left(\mathcal{G}, \reals^c, \reals^{c'}\right)$ a feature transformation over $\mathcal{G}$. We say that $\Omega$ \textit{ uniformly additively approximates} $H$, notated $\Omega\approx H$ if $ \forall \epsilon>0 \ \forall \text{ compact }K^n\subset\reals^{nc} \ \exists F\in \Omega \ \text{such that:}, \ \forall \gph\in\mathcal{G} \ \forall X\in K^{n_\gph c} \ \left\|F_\gph\left( X\right)- H_\gph\left(X\right)\right\|_\infty \leq \epsilon $ 
    where the sup norm $\|\cdot\|_\infty$ is taken over all nodes and dimensions of $n_\gph c'$ dimensional output.
\end{definition}

Note that this definition differs from our Definition~\ref{def:compunif} in that it requires uniform approximation over all graphs in $\mathcal{G}$ simultaneously, while we allow the width to vary with the finite subset $\mathcal{K}\subseteq\mathcal{G}$, similar to how classical results allow the width to vary with the compact set over which to approximate the function. Such a definition has proven useful in \citet{rosenbluth_might_2023} to distinguish different aggregation functions and in \citet{rosenbluth_distinguished_2023} to distinguish MPNNs with virtual nodes from Transformers. However, we argue that \textbf{the definition above is too strong for a finite parameter GNN}. This is because it requires uniform approximation over a \emph{non-compact set}, which contrasts with traditional work on expressivity and is generally unfeasible and impractical. Indeed, finite-parameters MLPs are not universal over the whole domain $\reals$ under the $\ell_\infty$-norm. On an infinite collection of featured graphs, the topology is the disjoint union topology on $\bigsqcup_{\gph\in\mathcal{G}}\reals^{n_{\gph}d}$, a compact subset consists of a finite set of graphs, and for each graph $\gph$ only non-zero on a compact subset of $\reals^{nd}$. For these reasons, we introduce Definition~\ref{def:compunif}, which is still rich enough to distinguish between BuNNs and MPNNs.

\section{Why classical arguments do not apply to compact uniform approximation}\label{app:revrep}

A seminal result in GNN expressivity is Theorem 2 in \citet{morris_weisfeiler_2019}. In this section, we discuss why the arguments do not hold for compact uniform approximation, even when enriched with injective positional encodings. We start by formally defining injective positional encodings.

\begin{definition}\label{def:posenc}
    A positional encoding $\pi$ on a graph $\gph$ is a map $\pi(\gph): \V\rightarrow \mathbb{R}^k$ which assigns a $k$-dimensional feature to every node
    %\footnote{One can additionally add the requirement that two isomorphic nodes must be mapped to the same positional encoding. If such requirement fails, such as for Laplacian eigenvectors, the positional encodings are often considered non-equivariant \cite{kreuzer2021rethinking}. We do not require such additional requirement for our analysis.}
    . A positional encoding $\pi$ on a family of graphs $\mathcal{G}$ is a positional encoding on all graphs $\gph\in\mathcal{G}$. A positional encoding $\pi$ is injective on $\mathcal{G}$ for any graph $\gph\in\mathcal{G}$ and every node $u\in\gph$ there is no other node $v\in\gphh$ for any $\gphh\in\mathcal{G}$ with $\pi(u) = \pi(v)$. In other words, each positional encoding corresponds to a unique node in the dataset.\footnote{Note that on symmetric graphs such as the cycle, it is not possible to have injectivity if the positional encoding is equivariant. However, in practice many steps of the $1$-WL color refinement will be equivariant and satisfy such assumption on most graphs.}
\end{definition}

The setting in \citet{morris_weisfeiler_2019} Theorem $2$ considers a single graph with fixed node features (or colors/labels). In contrast, our Theorem is for a family of graphs, and more importantly, for each graph $\gph$, the node features are not fixed but can vary in any compact subspace of feature space. This means that for a single graph of size $n$, our statement holds, for example, on the unit cube $[0, 1]^n$, while the result in \citet{morris_weisfeiler_2019} only holds for a specific point in $\mathbb{R}^n$.

Fixing the node features is precisely what makes the construction in \citet{morris_weisfeiler_2019} possible. Indeed the proof starts by assuming that the initial coloring is ``linearly independent modulo equality", denoted by $\mathbf{F}^{(0)}_{l, 0}$ in their proof. This property on node features is indeed central to the construction. It is used several times, for example ``Observe that colors are represented by linearly independent row vectors" and "$\mathbf{F}^{(t+1)}_{l, 0}$ is linearly independent modulo equality". Such an assumption is possible when dealing with fixed node features: since there are at most $n$ colors, it suffices to take a one-hot encoding of those colors, which fits in a feature space of dimension at most $n$. This assumption is also crucial for the injectivity of the sum aggregation.

In our setting, the node features can take any value in a continuum of features, each belonging to a compact subspace $K$ of $\mathbb{R}^c$. Encoding such a continuum as a one-hot encoding cannot be done in a finite-dimensional vector space (since there are $\{0, \ 1\}^K$ possibilities, which can be uncountable, and each need to be linearly independent). Hence, their construction fails in the setting where the node features are not fixed but can vary on any compact subspace of $\mathbb{R}^c$.

\section{Algorithmic and Implementation details}\label{app:algimpdet}
In this section, we provide more details on the implementation of BuNNs. We start by discussing how to use several vector-field channels when the input dimension is greater than the bundle dimension. We then discuss how to use several bundles at once when a single bundle is insufficient. We then combine both views, namely having several vector-field channels on several bundles at once. Finally, we describe how we compute our bundle maps in the experiments.

\xhdr{Extending to several vector-field channels} When the signal dimension exceeds the bundle dimension, i.e. $c>d$, we cannot directly apply BuNNs to the input signal. In that case, we first transform the signal into a hidden dimension, a multiple of the bundle dimension, i.e.$c = dp$. We reshape the input signal into $p$ channels of $d$-dimensional vector fields, where we apply the diffusion step (Equation~\ref{eq:diffuse}) on each $p$ channels simultaneously, and we apply the weight matrix $\mathbf{W}\in \reals^{dp\times dp}$ by first flattening the node signals into $dp$ dimensions, then multiplying by $\mathbf{W}$, and then reshaping it into $p$ channels of $d$ dimensional vector fields.

\xhdr{Extending to several bundles} Learning a high dimensional orthogonal matrix O($d$) becomes expensive since the manifold of orthogonal matrices is $\frac{d(d-1)}{2}$ dimensional. However, we can compute many low-dimensional bundles in parallel. In practice, we found that using several $2$-dimensional bundles was enough. Computing $b$ different $2$-dimensional bundles requires only $b$-parameters since the manifold O($2$) is $1$-dimensional. We, therefore, also use different `bundle channels' given by an additional hyper-parameter -- the number of bundles, which we denote $b$. Given an input signal of dimension $c=db$, we can decompose the signal into $b$ bundle channels of dimension $d$. We can compute the diffusion step (Equation~\ref{eq:diffuse}) for each bundle in parallel. For the update step (Equation~\ref{eq:update}), we apply the weight matrix $\mathbf{W}\in \reals^{bd\times bd}$ by first flattening the node signals into $bd$ dimensions, then multiplying by $\mathbf{W}$, and then reshaping it into $b$ bundle channels of $d$ dimensional vector fields over $b$ different bundle structures.

\begin{remark} We note that using $b$ different $d$ dimensional bundles is equivalent to parameterizing a subset of one $bd$-dimensional structure, consisting of the orthogonal map $\mathbf{O}\in O(bd)\subset \reals^{bd\times bd}$ that are block diagonal matrices $\mathbf{O} = \bigoplus_{i=1\ldots b}\mathbf{O}_i$,
with each $\mathbf{O}_i\in O(d)$.
\end{remark} 

\xhdr{Extending to several bundles and vector-field channels} We can combine the above two observations. Given an input signal of dimension $c=bdp$, we can subdivide this into $b$ different bundle structures of dimension $d$ and $p$ channels for each bundle. We diffuse on the appropriate bundle structure and flatten the vector fields into a $c\times c$ vector before applying the learnable parameters.

\xhdr{Computing the bundle maps} In our experiments, we noticed that having several bundles of dimension $2$ was more efficient than one bundle of large dimensions, while there was no clear performance gain when using higher dimensional bundles. To compute the $b$ bundle maps $\mathbf{O}_v$ we therefore only need $b$ rotation angles $\theta_v$, one per bundle. In our experiments, we use Housholder reflections using the python package \citet{obukhov_efficient_2021} or direct parameterization. For direct parameterization, we do the following: since the matrix group $O(2)$ is disconnected, we always take $b$ to be even and parameterize half the bundles as rotation matrices $r(\theta)=\begin{pmatrix}
    \cos\left(\theta\right) & \sin\left(\theta\right)\\
    -\sin\left(\theta\right) & \cos\left(\theta\right)
\end{pmatrix}$ and the other half to correspond to matrices with determinant $-1$, which can be parameterized by $r^*(\theta) = \begin{pmatrix}
\cos\left(\theta\right) & \sin\left(\theta\right)\\
\sin\left(\theta\right) & -\cos\left(\theta\right)
\end{pmatrix}$. We compute the angles $\theta$ as in Equation~\ref{eq:bundlemaps} where the network $\boldsymbol{\phi}^{\left(\ell\right)}$ is either an MLP or a GNN. The network $\boldsymbol{\phi}$ is either shared across layers or differing at every layer.

\xhdr{Taylor approximation algorithm} We now provide pseudo-code on how we implement Equations~\ref{eq:update}, and ~\ref{eq:diffuse}. We then proceed with a complexity analysis. The key idea of the algorithm is that the bundle heat kernel can be approximated efficiently using the standard graph heat kernel.

\begin{algorithm}[H]
	\begin{algorithmic}[1]
		\Statex{\textbf{Input}: Normalized graph Laplacian $\mathbf{\mathcal{L}}$, Orthogonal maps $\mathbf{O}_v^{\left(\ell\right)}\ \forall v\in \gph$, Node features $\mathbf{X}^{\left(\ell\right)}\in\reals^{n\times d}$, Time $t$, Maximum degree $K$, Channel mixing matrix $\mathbf{W}^{\left(\ell\right)}$, bias $\mathbf{b}^{\left(\ell\right)}$  }
		\Statex{\textbf{Output}: Updated node features $\mathbf{X}^{\left(\ell\right)}$}
        \State{$\mathbf{h}_v^{\left(\ell\right)} \get {\mathbf{O}_v^{\left(\ell\right)}}\mathbf{x}_v^{\left(\ell\right)} \ \ \forall v\in \V$} \Comment{\textbf{Sync.:} Go to global representation}
        \State{$\mathbf{H}^{\left(\ell\right)} \get \mathbf{H}\mathbf{W}^{\left(\ell\right)} +\mathbf{b}^{\left(\ell\right)}$} \Comment{Update features with parameters}
        \State{$\mathbf{X}^{\left(\ell+1\right)} \get \mathbf{H}^{\left(\ell\right)} $} \Comment{approximation of degree $0$}
        \For{$k=1, \ldots K$}
            \State{$\mathbf{H}^{\left(\ell\right)}\get -\frac{t}{k}\mathbf{\mathcal{L}}\mathbf{H}^{\left(\ell\right)}$} \Comment{Term of degree $k$}
            \State{$\mathbf{X}^{\left(\ell+1\right)} \get \mathbf{X}^{\left(\ell+1\right)} + \mathbf{H}^{\left(\ell\right)}$} \Comment{Approximation of degree $k$}
		\EndFor
        \State{$\mathbf{x}_v^{\left(\ell+1\right)} \get {\mathbf{O}_{v}^{\left(\ell\right)}}^T\mathbf{x}_v^{\left(\ell+1\right)}\ \ \forall v\in \V$} \Comment{\textbf{Deync.:} Return to local representation}
        \State{\Return $\mathbf{X}^{\left(\ell+1\right)}$}
	\end{algorithmic}
	\caption{\textbf{Taylor expansion implementation of a BuNN layer}}
	\label{alg:taylor}
\end{algorithm}

The complexity of the algorithms is as follows. There are $3$ matrix-vector multiplications done at each node in lines $1$, $2$, and $8$, which are done in $\mathcal{O}\left(3d^2|\V|\right)$. The for loops consist of matrix-matrix multiplications, which are done in $\mathcal{O}\left(d|\E|\right)$ with sparse matrix-vector multiplication. The memory complexity is $\mathcal{O}\left((d+d^2)|\V|\right)$ since we need to store $d$ dimensional vectors and the orthogonal maps for each node. The exact implementation is described in Algorithm~\ref{alg:taylor}

\xhdr{Spectral method} We now describe how to implement a BuNN layer using the eigenvectors and eigenvalues of the Laplacian.

\begin{algorithm}[H]
	\begin{algorithmic}[1]
		\Statex{\textbf{Input}: Eigenvectors and eigenvalues graph Laplacian $\left(\phi_i, \lambda_i\right)_{i}$, Orthogonal maps $\mathbf{O}_v^{\left(\ell\right)}\ \forall v\in \gph$, Node features $\mathbf{X}^{\left(\ell\right)}\in\reals^{n\times d}$, Time $t$, Maximum degree $K$, Channel mixing matrix $\mathbf{W}^{\left(\ell\right)}$, bias $\mathbf{b}^{\left(\ell\right)}$  }
		\Statex{\textbf{Output}: Updated node features $\mathbf{X}^{\left(\ell\right)}$}
        \State{$\mathbf{h}_v^{\left(\ell\right)} \get {\mathbf{O}_v^{\left(\ell\right)}}\mathbf{x}_v^{\left(\ell\right)} \ \ \forall v\in \V$} \Comment{\textbf{Sync.}: Go to global representation}
        \State{$\mathbf{H}^{\left(\ell\right)} \get \mathbf{H}\mathbf{W}^{\left(\ell\right)} +\mathbf{b}^{\left(\ell\right)}$} \Comment{Update features with parameters}

        \State{$\mathbf{X}^{(\ell+1)} \get \sum_{i}e^{-t\lambda_i}\phi_i\phi_i^T \mathbf{H}^{(\ell)}$}  \Comment{Spectral solution to heat equation}
        \State{$\mathbf{x}_v^{\left(\ell+1\right)} \get {\mathbf{O}_{v}^{\left(\ell\right)}}^T\mathbf{x}_v^{\left(\ell+1\right)}\ \ \forall v\in \V$} \Comment{\textbf{Desync.}: Return to local representation}
        \State{\Return $\mathbf{X}^{\left(\ell+1\right)}$}
	\end{algorithmic}
	\caption{\textbf{Spectral implementation of a BuNN layer}}
	\label{alg:spec}
\end{algorithm}

% \subsection{Householder reflections.} Many different parameterizations of the group $O(n)$ exist. While direct parameterizations are possible for $n=2,\ 3$ it becomes increasingly complex to do so for larger $n$, and a general method working for all $n$ is a desirata. While there are several methods to do so, we use Householder reflection since it is used in related methods \citep{bodnar_neural_2022}. We use the Pytorch package from \citep{noauthor_torch-householder_nodate}. Given $k$ vectors $v_i\in \mathbb{R}^d$, define the Householder matrices as $H_i = I - 2\frac{v_iv_i^T}{\|v_i\|_2^2}$, and define $U=\prod_{i=1}^k H_i$. All orthogonal matrices may be obtained using the product of $d$ such matrices. Hence the map $\mathbf{R}^{d\times d} \rightarrow O(d)$ mapping $V = (v_i)$ to $U$ is a parameterization of the orthogonal group. We use pytorch implementations allowing autograd provided in \citep{obukhov2021torchhouseholder}.

\subsection{Householder reflections.} Many different parameterizations of the group $O(n)$ exist. While direct parameterizations are possible for $n=2,\ 3$ it becomes increasingly complex to do so for larger $n$, and a general method working for all $n$ is a desirata. While there are several methods to do so, we use Householder reflection since it is used in related methods \citep{bodnar_neural_2022}. We use the Pytorch package from \citep{obukhov_efficient_2021}. Given given $k$ vectors $v_i\in \mathbb{R}^d$, define the Householder matrices as $H_i = I - 2\frac{v_iv_i^T}{\|v_i\|_2^2}$, and define $U=\prod_{i=1}^k H_i$. All orthogonal matrices may be obtained using the product of $d$ such matrices. Hence the map $\mathbf{R}^{d\times d} \rightarrow O(d)$ mapping $V = (v_i)$ to $U$ is a parameterization of the orthogonal group. We use pytorch implementations allowing autograd provided in \citep{obukhov2021torchhouseholder}.

\section{Experiment details}\label{app:expdet}
In this section we provide additional information about the experiments on the heterophilic graph benchmarks, the LRGB benchmarks, and the synthetic experiments. All experiments were ran on a cluster using NVIDIA A10 (24 GB) GPUs, each experiment using at most $1$ GPU. Each machine in the cluster has 64 cores of Intel(R) Xeon(R) Gold 6326 CPU at 2.90GHz, and $\sim$500GB of RAM available. The synthetic experiments from Section~\ref{sec:newsynth} were run on CPU and each run took roughly $20$ minutes. The heterophilic experiments from Section~\ref{sec:experiments} were run GPU and varied between $5$ minutes to $1.5$ hours. The LRGB experiments were run on GPU and varied between $0.5$ hours and $4$ hours.
%A rough estimate of total compute time to reproduce all experiments is $21$h on CPU detailed in addition to $139$ hours on GPU detailed above. The hyperparameter tuning took considerably longer, with a rough estimate being $1000$h.
% All source code can be found at the following URL \url{https://anonymous.4open.science/r/BuNNs/README.md}

\subsection{LRGB: training and tuning.}\label{app:lrgb}

For \texttt{peptides-func} and \texttt{peptides-struct} we use a fixed parameter budget of $\sim 500k$ as in \citet{dwivedi_long_2023}. We fix hyper-parameters to be the best GCN hyper-parameters from \citet{tonshoff_where_2023}, and tune only BuNN-specific parameters as well as the use of BatchNorm.  In Table~\ref{tab:gridLRGB}, we report the grid of hyper-parameters that we searched, and denote in bold the best combinations of hyper-parameters. The parameters fixed from \citet{tonshoff_where_2023} are the following:

\begin{itemize}
    \item Dropout $0.1$
    \item Head depth $3$
    \item Positional Encoding: LapPE for \texttt{struct} and RWSE for \texttt{func}
    \item Optimizer: AdamW with a cosine annealing learning rate schedule and linear warmup.
    \item Batch size $200$
    \item We use skip connection as implemented in \citet{dwivedi_long_2023} and not in \citet{tonshoff_where_2023}. That is, the skip connection does not skip the non-linearity.
\end{itemize}

For the BuNN specific parameters, we use $2$ dimensional bundles, whose angles $\theta$ we compute with the help of a small SumGNN architecture using a sum aggregation as defined by $\mathbf{\theta}_v^{(\ell)} = \sigma\left(\mathbf{W}_{\text{s}} \mathbf{x}^{(\ell)}_v + \mathbf{W}_{\text{n}}\sum_{u\in\mathcal{N}(v)} \mathbf{x}^{(\ell)}_u\right)$ where the input dimension is the hidden dimension, the hidden dimension is twice the number of bundles and the output is the number of bundles. The number of SumGNN layers is a hyper-parameter we tune. When it is $0$ we use a $2$ layer MLP with hidden dimension also twice number of bundles. For each hyper-parameter configuration, we set the hidden dimension in order to respect to the parameter budget.

\begin{table}[htbp]
    \centering
    \begin{tabular}{|c|cccc|}
        \hline
        \multirow{2}{*}{Parameters} & All Values& \multicolumn{3}{c|}{Best Values} \\
        && \texttt{func} & \texttt{struct} &\texttt{PascalVOC-SP} \\
        \hline
        Num bundles b & $4$, $8$, $16$& $16$& $16$ & $16$\\
        Number of BuNN layers & $1 \ - \ 20$ & $6$ & $4$ & $20$\\
        Number of SumGNN layer & $0 \ - \ 3$& $1$ & $0$ & $0$\\
        Weight decay & $0,\ 0.1,\ 0.2,\ 0.3$ & $0.2$ & $0.2$ & $0.2$\\
        Time $t$ & $0.1$, $1$, $10$, $100$ & $1$ & $1$ & $1$ \\
        \hline
    \end{tabular}
    \caption{Grid of hyper-parameters for \texttt{peptides-func}, \texttt{peptides-struct}, and \texttt{PascalVOC-SP}.}
    \label{tab:gridLRGB}
\end{table}

\subsection{Heterophilic graphs: training and tuning.}\label{app:hetero}
For the heterophilic graphs we use the source code from \citet{platonov_critical_2023} in which we add our layer definition. We report all training parameters that we have tuned. Namely, we use GELU activation functions, the Adam optimizer with learning rate $3 \times 10^{-5}$, and train all models for $2000$ epochs and select the best epoch based on the validation set. To compute the bundle maps, we compute the parameters $\theta$  with a GraphSAGE architecture shared across layers ($\phi$ method = shared) or different at each layer ($\phi$ method = not shared), with hidden dimension dimension the number of bundles. The number of layers of this GNN is a hyper-parameter we tuned, which when set to $0$ we use a $2$ layer MLP. For each task we manually tuned parameters, which are subsets of the combinations of parameters in the grid  from Table~\ref{tab:heteroparam}. The implementation of the heat kernel used is either truncated Taylor series with degree $8$, or the spectral implementation. We report the best performing combination of parameters in Table~\ref{tab:besthetero}. For the NSD baseline we use code from \cite{bodnar_neural_2022} and tune equivalent parameters. We report the grid of hyperparameters in Table~\ref{tab:sheafparam} and best values in Table~\ref{tab:bestsheaf}

\begin{table}[htbp]
    \centering
    \begin{tabular}{|c|c|}
        \hline
        Parameters & All Values\\
        \hline
        Hidden dim & $256,512$\\
        Num bundles $b$ & $2, 4, 8, 16, 32, 64, 128,256$ \\
        Bundle dimension $d$ & $2$ \\
        Number of BuNN layers & $1 \ - \ 8$ \\
        Number of GNN layer $\phi$ & $0 \ - \ 8$\\
        Time $t$ & $0.1, 1, 10, 100$\\
        shared $\phi$ & \xmark,\cmark \\
        Dropout & $0.0, \ 0.2$\\
        Learning rate& $3\times 10^{-4}, 3\times 10^{-5}$ \\
        \hline
    \end{tabular}
    \caption{Parameters searched when tuning on the heterophilic graph benchmark datasets.}
    \label{tab:heteroparam}
\end{table}

\begin{table}[htbp]
    \centering
    \small 
    \begin{tabular}{|c|ccccc|}
        \hline
        Parameters & \multicolumn{5}{c|}{Best Values} \\
        &  \texttt{roman-empire} & \texttt{amazon-ratings} & \texttt{minesweeper} & \texttt{tolokers} & \texttt{questions} \\
        \hline
        Hidden dim & $512$& $512$ &$512$& $512$ & $256$\\
        Num bundles b & $32$ & $4$ & $128$& $256$ & $1$ \\
        Bundle dim & $2$ & $2$ & $2$ & $2$ & $2$\\
        Number of BuNN layers & $6$ & $2$ & $8$& $6$ & $6$\\
        Number of GNN layer & $8$ & $0$ & $8$ & $7$& $6$\\
        Time $t$ & $100$ & $1.5$ & $1$& $1$ & $1$\\
        shared $\phi$& \xmark & \xmark & \xmark & \xmark & \cmark\\
        Dropout & $0.2$& $0.2$& $0.2$& $0.2$& $0.2$\\
        Learning rate & $3\times 10^{-4}$ & $3\times 10^{-4}$ & $3\times 10^{-5}$ & $3\times 10^{-5}$ & $3\times 10^{-5}$\\
        \hline

    \end{tabular}
    \caption{Best parameter for each dataset in the heterophilic graph benchmarks.}
    \label{tab:besthetero}
\end{table}

\begin{table}[htbp]
    \centering
    \begin{tabular}{|c|c|}
        \hline
        Parameters & All Values\\
        \hline
        Hidden dim & $64,\ 128,\ 256^*,512^*$\\
        Sheaf dimension $d$ & $2,\ 4, \ 8$ \\
        Number of layers & $1,\ 2,\ 3,\ 4^*,\ 5^*, \ 6^*, \ 7^*, 8^*$ \\
        Dropout & $0.0, \ 0.2$\\
        Learning rate& $3\times 10^{-4}, 3\times 10^{-5}$ \\
        \hline
    \end{tabular}
    \caption{Parameters searched when tuning NSD on the heterophilic graph benchmark datasets. Parameters marked by $^*$ ran out of memory on some datasets.}
    \label{tab:sheafparam}
\end{table}

\begin{table}[htbp]
    \centering
    \small 
    \begin{tabular}{|c|ccccc|}
        \hline
        Parameters & \multicolumn{5}{c|}{Best Values} \\
        &  \texttt{roman-empire} & \texttt{amazon-ratings} & \texttt{minesweeper} & \texttt{tolokers} & \texttt{questions} \\
        \hline
        Hidden dim & $256$ & $256$ & $512$& $64$ & $64$\\
        Sheaf dim & $2$ & $4$ & $2$ & $2$ & $2$\\
        Number of layers & $8$ & $3$ & $8$ & $5$ & $5$\\
        Dropout & $0.2$ & $0.2$ & $0.2$ & $0.2$ & $0.2$\\
        Learning rate & $3\times 10^{-4}$ & $3\times 10^{-4}$ & $3\times 10^{-4}$ & $3\times 10^{-4}$ &  $3\times 10^{-4}$\\
        \hline

    \end{tabular}
    \caption{Best parameter for NSD on each dataset in the heterophilic graph benchmarks.}
    \label{tab:bestsheaf}
\end{table}

\newpage

\section{Empirical runtime}

\begin{table}[H]
\centering
\caption{Average training time, over $3$ runs, for different architectures of with $5$ layers and hidden dimension of $512$ on the Heterophilic datasets (averaged over $3$ runs). BuNN and NSD use a single $2$ dimensional sheaf. BuNN is implemented spatially. All experiments were performed on an NVIDIA A10 (24GB) GPU.}\label{tab:run-times}
    \scalebox{0.85}{
    \centering
    \setlength\tabcolsep{4pt} % default value: 6pt
    \rowcolors{2}{lightgray}{}
    \renewcommand{\arraystretch}{1.2}
    \begin{tabular}{lrrrrr}\toprule
    \emph{avg. training time} & \texttt{roman-empire}  &\texttt{amazon-ratings} & \texttt{minesweeper} & \texttt{tolokers} & \texttt{questions} \\\midrule % &\coco &\pcqmcontact
    \# num nodes &  22,662 & 24,492 & 10,000 & 11,758 & 48,921\\
    \# num edges & 32,927 & 93,050 & 39,402 & 519,000 & 153,540\\\midrule
    SAGE &  1:45 & 1:43 & 0:48 & 1:15 & 3:01\\
    % GAT & 1:48 & 2:19 & 00:59 & 3:12 & 4:31\\
    GAT &  2:33 & 2:42 & 1:11 & 3:24 & 5:22\\
    % GT & 3:08 & 3:51 & 1:36 & 4:08 & 7:22\\
    GT & 3:31 & 4:12 & 1:52 & 4:20 & 7:57\\\midrule
    NSD & 7:58 & 9:16 & 7:13 & OOM & OOM \\
    BuNN & 4:21 & 5:18 & 2:48& 3:51 & 9:32\\
    \bottomrule
    \end{tabular}
    }
\end{table}

\begin{table}[H]
\centering
\caption{Average time per epoch, over $5$ epochs, for different architectures with $6$ layers and $500$k parameters on the LRGB datasets. BuNN is implemented spectrally and uses a single $2$ dimensional bundle. All experiments were performed on an NVIDIA A10 (24GB) GPU.}\label{tab:run-timeslrgb}
\scalebox{0.85}{
    \centering
    \setlength\tabcolsep{4pt} % default value: 6pt
    \rowcolors{2}{lightgray}{}
    \renewcommand{\arraystretch}{1.2}
    \begin{tabular}{lrrrr}\toprule
    \emph{avg. time / epoch}  &\pepfunc & \pepstruct &\pascal \\\midrule % &\coco &\pcqmcontact
    \# graphs & 15,535 & 15,535 & 11,355\\
    \# nodes & 2,344,859 & 2,344,859 & 15,955,687 \\
    \# edges & 4,773,974 & 4,773,974 & 32,341,644 \\
    \midrule
    GCN & 5.6s &5.3s &15.2s \\
    % GCNII &8.2s &106s &137s &2.7s &2.4s \\
    % GINE &7.2s &91s &138s &2.5s &2.6s \\
    GatedGCN &8.5s &8.3s &22.5s\\
    GPS  & 17.2s & 17.2s & 38.2s\\ \midrule
    % Transformer+LapPE &13s &154s &145s &5.8s &5.9s \\
    % SAN+LapPE &179s &2190s &793s &54.7s &53.6s \\
    % SAN+RWSE &165s &2014s &740s &49.1s &49.7s \\
    % BuNN & 9.6s & 10.7s  & 63.2s\\
    BuNN & 12.7s & 12.5s  & 28.7s\\
    % LapPE precomp. &8min 40s &1h 34min &5min 18s &1min 13s &1min 14s \\
    % RWSE precomp. &7min 51s &1h 24min &6min 29s &53s &53s \\
    % \midrule
    % BuNN (spatial K=1) & 6.7s & 7.9s & 58.7s\\
    % BuNN (spatial K=2) & 7.3s & 8.3s & 63.2s\\
    % BuNN (spatial K=4) & 8.6s & 9.3s & 70.9s\\
    % BuNN (spatial K=8) & 11.3s & 11.36s  & 80.2s\\
    \bottomrule
    \end{tabular}
}
\end{table}

\section{Ablation: Positional Encoding ablation}

We perform an ablation on the use of PE on the \texttt{peptides-func} and \texttt{peptides-struct} datasets. We retrain a model using the best hyperparameter where we change the used PEs. We compare the two main PE used in graph machin learning, namely Laplacian Positional encoding (LPE) and Random Walk Structural Encodings (RWSE), and we consider using no PE as a baseline. Results can be found in Table~\ref{tab:PEab}. We observe that using PE is always beneficial to not using PE, however each task seem to admit a preferred PE, since LPE is better for \texttt{Peptides-struct} and RWSE for \texttt{Peptides-func}.

\begin{table}[H]
    \centering
    \caption{Positional encoding ablation on peptides-func and peptides-struct}

    \begin{tabular}{c|ccc}
        & \texttt{RWSE} & \texttt{LPE} & \texttt{No PE} \\
        \bottomrule
        \texttt{Peptides-func} \ \textbf{Test AP $\uparrow$}& $\mathbf{72.76\pm0.65}$ & $ 72.25\pm 0.51$ & $71.76\pm 0.68$ \\
        \texttt{Peptides-struct} \textbf{Test MAE $\downarrow$} & $25.02\pm 0.15$ & $\mathbf{24.63\pm 0.12}$ & $25.32\pm 0.19$\\
        \bottomrule
    \end{tabular}
    \label{tab:PEab}
\end{table}

\section{Ablation: Importance of $W$ and $b$}
We run an ablation on the importance of the $\mathbf{W}$ and $b$ parameters in Equation~\ref{eq:update}. We retrain a model removing them and compare it to a model trained using them. We report results in Table~\ref{tab:wbab}. The results suggest that these parameters help. However, the result suggest that they are not essential to achieve good results, as even without them, the model performs well and beats all but one of the baselines on \texttt{peptides-func}.

\begin{table}[H]
    \centering
    \caption{$W$ and $b$ importance ablation on \texttt{peptides-func} and \texttt{peptides-struct}}

    \begin{tabular}{c|ccc}
        & \texttt{with} & \texttt{without}\\
        \bottomrule
        \texttt{Peptides-func} \ \textbf{Test AP $\uparrow$}& $\mathbf{72.76\pm0.65}$ & $70.75\pm 0.36$\\
        \texttt{Peptides-struct} \textbf{Test MAE $\downarrow$} &  $\mathbf{24.63\pm 0.12}$ & $25.28\pm 0.32$ \\
        \bottomrule
    \end{tabular}
    \label{tab:wbab}
\end{table}

\section{Tree-NeighborsMatch Task}

As an additional synthetic task, we evaluate BuNNs on the \texttt{Tree-NeighborsMatch} task, proposed in \cite{alon_bottleneck_2020} to show that MPNNs suffer from over-squashing. We use their code and setup to evaluate the capacity fo BuNNs to alleviate over-squashing. We use their reported results and add our own results, ran using their experimental setup, for a $2$-layer deep BuNN with $t=\infty$ and $32$ bundles. We report the results in Figure~\ref{fig:neighborsmatch}. We observe that BuNN beat all MPNNs by a large margin, with perfect until $r=6$. As the task gets harder with a larger depth, the accuracy for BuNNs drops slower than the accuracy of MPNNs. These results confirm that BuNNs alleviate over-squashing.

\begin{figure}[H]
    \centering
    \includegraphics[width=\linewidth]{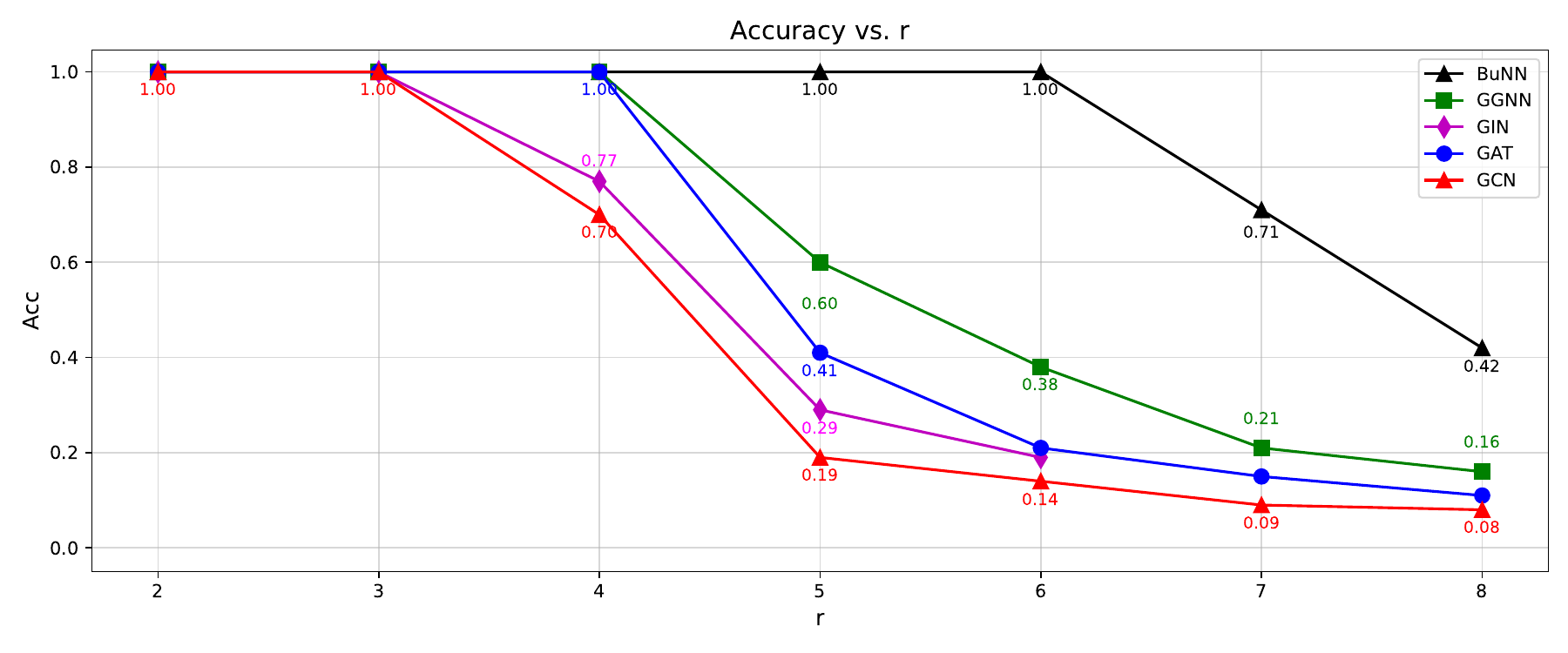}
    \caption{Results for \texttt{Tree-NeighborsMatch} task.}
    \label{fig:neighborsmatch}
\end{figure}

\end{document}